\newtheorem{thm}{Theorem}
\newtheorem{prop}{Proposition}
\newtheorem{cor}{Corollary}
\newtheorem{lem}{Lemma}
\newtheorem{asm}{Assumption}
\theoremstyle{definition}
\newtheorem*{rmk}{Remark}
\newcommand{\defeq}{\overset{\mathrm{def}}{=}}
\DeclareMathOperator*{\argmin}{arg\,min}
\DeclareMathOperator*{\sgn}{sgn}
\newcommand{\Sp}{\mathcal{H}}
\newcommand{\Rfsp}{\mathcal{H}_M}
\newcommand{\Cbsp}{\mathcal{H}^{+}_M}
\newcommand{\Ltwo}[1]{L^2(#1)}
\newcommand{\G}{g_\lambda}
\newcommand{\GG}{g_{M, \lambda}}
\newcommand{\GGG}{\widetilde{g}_\lambda}
\newcommand{\GGGG}{\widetilde{g}_{M, \lambda}}
\newcommand{\loss}{\mathcal{L}}
\begin{document}

%

%

\twocolumn[

\runningtitle{Exponential Convergence Rates of Classification Errors on Learning with SGD and Random Features}
\aistatstitle{Exponential Convergence Rates of Classification Errors \\on Learning with SGD and Random Features}

\runningauthor{Shingo Yashima, Atsushi Nitanda, Taiji Suzuki}

\aistatsauthor{ Shingo Yashima$^{1}$ \\ \texttt{syashima9@gmail.com} \And Atsushi Nitanda$^{1,2,3}$\\ \texttt{nitanda@mist.i.u-tokyo.ac.jp} \And  Taiji Suzuki$^{1,2}$\\ \texttt{taiji@mist.i.u-tokyo.ac.jp}}

\aistatsaddress{ $^1$Graduate School of Information Science and Technology, The University of Tokyo\\
$^2$Center for Advanced Intelligence Project, RIKEN\\
$^3$PRESTO, Japan Science and Technology Agency } ]

\begin{abstract}
Although kernel methods are widely used in many learning problems, they have poor scalability to large datasets. To address this problem, sketching and stochastic gradient methods are the most commonly used techniques to derive computationally efficient learning algorithms. We consider solving a binary classification problem using random features and stochastic gradient descent, both of which are common and widely used in practical large-scale problems. Although there are plenty of previous works investigating the efficiency of these algorithms in terms of the convergence of the objective loss function, these results suggest that the computational gain comes at expense of the learning accuracy when dealing with general Lipschitz loss functions such as logistic loss. In this study, we analyze the properties of these algorithms in terms of the convergence not of the loss function, but the classification error under the strong low-noise condition, which reflects a realistic property of real-world datasets. We extend previous studies on SGD to a random features setting, examining a novel analysis about the error induced by the approximation of random features in terms of the distance between the generated hypothesis to show that an exponential convergence of the expected classification error is achieved even if random features approximation is applied. We demonstrate that the convergence rate does not depend on the number of features and there is a significant computational benefit in using random features in classification problems under the strong low-noise condition.
\end{abstract}

\section{Introduction}
Kernel methods are commonly used to solve a wide range of problems in machine learning, as they provide flexible non-parametric modeling techniques and come with well-established theories about their statistical properties \citep{caponnetto2007optimal, steinwart2009optimal, mendelson2010regularization}. However, computing estimators in kernel methods can be prohibitively expensive in terms of memory requirements for large datasets.
There are two popular approaches to scaling up kernel methods. The first is sketching, which reduces data-dimensionality by random projections. A random features method \citep{rahimi2008random} is a representative, which approximates a reproducing kernel Hilbert space (RKHS) by a finite-dimensional space in a data-independent manner. The second is stochastic gradient descent (SGD), which allows data points to be processed individually in each iteration to calculate gradients. Both of these methods are quite effective in reducing memory requirements and are widely used in practical tasks.\\
For the theoretical properties of random features, several studies have investigated the approximation quality of kernel functions \citep{sriperumbudur2015optimal, sutherland2015error, pmlr-v89-szabo19a}, but only a few have considered the generalization properties of learning with random features. For the regression problem, its generalization properties in ERM and SGD settings, respectively, have been studied extensively by \citet{rudi2017generalization} and \citet{carratino2018learning}. In particular, they showed that $O(\sqrt{n} \log n)$ features are sufficient to achieve the usual $O(1/\sqrt{n})$ learning rate, indicating that there is a computational benefit to using random features. 
However, it remains unclear whether or not it is computationally efficient for other tasks. By \citet{rahimi2009weighted}, the generalization properties were studied with Lipschitz loss functions under $\ell_\infty$-constraint in hypothesis space, and it was shown that $O(n\log n)$ features are required for $O(1/\sqrt{n})$ learning bounds. Also, by \citet{li2018toward}, learning with Lipschitz loss and standard regularization was considered instead of $\ell_\infty$-constraint, and similar results were attained. Both results suggest that computational gains come at the expense of learning accuracy if one considers general loss functions.
\\
In this study, learning classification problems with random features and SGD are considered, and the generalization property is analyzed in terms of the \textit{classification error}. Recently, it was shown that the convergence rate of the excess classification error can be made exponentially faster by assuming the \textit{strong low-noise condition} \citep{tsybakov2004optimal, koltchinskii2005exponential} that conditional label probabilities are uniformly bounded away from $1/2$ \citep{pillaud2018exponential, nitanda2018stochastic}. We extend these analyses to a random features setting to show that the exponential convergence is achieved if a sufficient number of features are sampled. Unlike when considering the convergence of loss function, the resulting convergence rate of the classification error is independent of the number of features. In other words, an arbitrary small classification error is achievable as long as there is a sufficient number of random features. So our result suggests that there is indeed a computational benefit to use random features in classification problems under the strong low-noise condition.
\paragraph{Remark}
Although several studies consider the optimal sampling distributions of features in terms of the worst-case error and show the superiority of random features \citep{bach2017equivalence, rudi2017generalization, li2018toward, sun2018but}, we do not explore this direction and treat the original random features because these distributions are generally intractable or require much computational cost to sample \citep{bach2017equivalence} whereas an efficient sampling algorithm is proposed in the case of Gaussian kernel \citep{avron2017random}. 
\\In addition, we should refer to Nystr\"om method \citep{williams2001using}, which is also a popular method to scale up kernel methods. In contrast to random features, Nystr\"om method approximates kernel function in data-dependent way. As a result, similar to calculating an optimized sampling distribution on random features, Nystr\"om method also requires data points before actual training starts and needs $O(nM)$ memory, which is more expensive than $O(M)$ in random features. These are reasons why we dealt with original algorithm of random features in this study.
\paragraph{Our Contributions} Our contributions are twofold. First, we analyze the error induced by the approximation of random features in terms of the $L^\infty$-norm between the generated hypothesis including population risk minimizers and empirical risk minimizers when using general Lipschitz loss functions in Section \ref{errany}. Our results can be framed as an extension of the analysis of \citet{cortes2010impact, sutherland2015error}, which analyzed the error in terms of the distance between empirical risk minimizers when using a hinge loss. However, it is not straightforward to extend these results to our case since we cannot access the closed-form solutions, unlike those previous results, when using the general loss functions and treating population risk minimizers. In addition, since the true and the approximated minimizer lie in different function spaces, it is not easy to derive $L^\infty$-norm bound between them. We deal with these difficulties with novel proof techniques.
\\
Second, using the above result, we prove that the exponential convergence rate of the excess classification error under the strong low-noise condition is achieved if a sufficient number of features are sampled in Section \ref{mainres}. Then we show that there is a significant computational gain in using random features rather than a full kernel method for obtaining a relatively small classification error. We also validate these results through experiments on synthetic datasets in Section \ref{exp}.

\section{Problem Setting}
\label{probset}
\subsection{Binary Classification Problem}
Let $\mathcal{X}$ and $\mathcal{Y}=\{-1, 1\}$ be a feature space and the set of binary labels, respectively; $\rho$ denotes a probability measure on $\mathcal{X} \times \mathcal{Y}$, by $\rho_{\mathcal{X}}$ the marginal distribution on $X$, and by $\rho(\cdot | X)$ the conditional distribution on $Y$, where $(X, Y) \sim \rho$. In general, for a probability measure $\mu$, $\Ltwo{\mu}$ denotes a space of square-integrable functions with respect to $\mu$, and $L^2(\mathcal{X})$ denotes one with respect to the Lebesgue measure. Similarly, $L^\infty(\mu)$ denotes a space of functions for which the essential supremum with respect to $\mu$ is bounded, and $L^\infty(\mathcal{X})$ denotes one with respect to Lebesgue measure.\\
In the classification problem, our final objective is to choose a discriminant function $g:\mathcal{X} \rightarrow \mathbb{R}$ such that the sign of $g(X)$ is an accurate prediction of $Y$. Therefore, we intend to minimize the expected classification error $\mathcal{R}(g)$ defined below amongst all measurable functions:
\begin{equation}
    \mathcal{R}(g) \defeq \mathbb{E}_{(X, Y)\sim \rho} \left[I(\sgn(g(X)), Y )\right],\label{error}
\end{equation}
where $\sgn(x) = 1$ if $x>0$ and $-1$ otherwise, and $I$ represents 0-1 loss:
\begin{align}
    I(y, y^\prime) \defeq \begin{cases}
1 & (y \neq y^\prime) \\
0 & (y = y^\prime).
\end{cases}
\end{align}
By definition, $g(x) = \mathbb{E}[Y|x] = 2\rho(1|x) - 1$ minimizes $\mathcal{R}$. However, directly minimizing \eqref{error} to obtain the Bayes classifier is intractable because of its non-convexity. Thus, we generally use the convex surrogate loss $l(\zeta, y)$ instead of the 0-1 loss and minimize the expected loss function $\mathcal{L}(g)$ of $l$:
\begin{align}
    \mathcal{L}(g) \defeq \mathbb{E}_{(X, Y)\sim \rho} \left[ l(g(X), Y )\right]. \label{loss}
\end{align}
In general, the loss function $l$ has a form $l(\zeta, y) = \phi(\zeta y)$ where $\phi:\mathbb{R} \rightarrow \mathbb{R}$ is a non-negative convex function. The typical examples are logistic loss, where $\phi(v) = \log(1+\exp(-v))$ and hinge loss, where $\phi(v) = \max \{0, 1-v\}.$ Minimizing the expected loss function \eqref{loss} ensures minimizing the expected classification \eqref{error} if $l$ is \textit{classification-calibrated} \citep{bartlett2006convexity}, which has been proven for several practically implemented losses including hinge loss and logistic loss.
\subsection{Kernel Methods and Random Features}
In this study, we consider a reproducing kernel Hilbert space (RKHS) $\mathcal{H}$ associated with a positive definite kernel function $k:\mathcal{X} \times \mathcal{X} \rightarrow \mathbb{R}$ as the hypothesis space. It is known \citep{aronszajn1950theory} that a positive definite kernel $k$ uniquely defines its RKHS $\Sp$ such that the reproducing property $f(x) = \langle f, k(\cdot, x) \rangle_\Sp$ holds for all $f \in \Sp$ and $x \in \mathcal{X}$, where $\langle \cdot, \cdot \rangle_\Sp$ denotes the inner product of $\Sp$. Let $\|\cdot\|_\Sp$ denote the norm of $\Sp$ induced by the inner product. Under these settings, we attempt to solve the following minimization problem:
\begin{align}
    \min_{g\in \Sp} \ \mathcal{L}(g) + \frac{\lambda}{2} \|g \|_\Sp^2 \label{problem}
\end{align}
where $\lambda>0$ is a regularization parameter. 
\\
However, because solving the original problem \eqref{problem} is usually computationally inefficient for large-scale datasets, the approximation method is applied in practice.
Random features \citep{rahimi2008random} is a widely used method for scaling up kernel methods because of its simplicity and ease of implementation. Additionally, it approximates the kernel in a data-independent manner, making it easy to combine with SGD. In random features, a kernel function $k$ is assumed to have the following expansion in some space $\Omega$ with a probability measure $\tau$:
\begin{equation}
    k(x, y) = \int_\Omega \varphi(x, \omega) \overline{\varphi(y, \omega)} \mathrm{d}\tau(\omega). 
    \label{eq1}
\end{equation}
The main idea behind random features is to approximate the integral \eqref{eq1} by its Monte-Carlo estimate:
\begin{equation}
    k_M(x, y) \defeq \frac{1}{M} \sum_{i=1}^{M} \varphi(x, \omega_i) \overline{\varphi(y, \omega_i)}, \ \ \  \omega_i \overset{i.i.d.}{\sim} \tau.\label{eq2}
\end{equation}
For example, if $k$ is a shift invariant kernel, by Bochner's theorem \citep{yoshida1995functional}, the expansion \eqref{eq1} is achieved with $\varphi(x, \omega) = C^\prime e^{i \omega^\top x}$, where $C^\prime$ is a normalization constant. Then, the approximation \eqref{eq2} is called random Fourier features \citep{rahimi2008random}, which is the most widely used variant of random features.\\
We denote the RKHS associated with $k$ and $k_M$ by $\Sp$ and $\Rfsp$, respectively. These spaces then admit the following explicit representation \citep{bach2017equivalence, bach2017breaking}:
\begin{align}
        &\Sp = \left \{\int_\Omega \beta(\omega) \varphi(\cdot, \omega) \mathrm{d} \tau(\omega) \ \middle| \ 
    \beta \in L^2(\tau) \right \}\\
    &\Rfsp = \left \{\sum_{i=1}^M \frac{\beta_i}{\sqrt{M}} \varphi(\cdot, \omega_i)  \ \middle| \ 
    | \beta_i | < \infty \right \}.
\end{align}
We note that the approximation space $\Rfsp$ is not necessarily contained in the original space $\Sp$. For $g \in \Sp$ and $h \in \Rfsp$, the following RKHS norm relations hold:
\begin{align}
    &\|g\|_{\Sp} = \inf \left\{\| \beta \|_{L^2(\tau)} \ \middle|\  g = \int_\Omega \beta(\omega) \varphi(\cdot, \omega) \mathrm{d} \tau(\omega) \right\}\\
    &\|h\|_{\Rfsp} = \inf \left\{ \|\beta\|_2 \ \middle|\  h = \sum_{i=1}^M \frac{\beta_i}{\sqrt{M}} \varphi(\cdot, \omega_i) \right\}.
\end{align}
As a result, the problem \eqref{problem} in the approximation space $\Rfsp$ is reduced to the following generalized linear model:
\begin{align}
    \min_{\beta \in \mathbb{R}^M} \ \mathcal{L}(\beta^\top \phi_M) + \frac{\lambda}{2} \| \beta \|_2^2 \label{rfproblem}
\end{align}
where $\phi_M$ is a feature vector:
\begin{align}
    \phi_M \defeq \frac{1}{\sqrt{M}}[ \varphi(\cdot, \omega_1), \ldots, \varphi(\cdot, \omega_M)]^\top.
\end{align}
In this paper, we consider solving the problem \eqref{rfproblem} using the averaged SGD. The details are discussed in the following section.

\subsection{Averaged Stochastic Gradient Descent}
SGD is the most popular method to solve large scale learning problems. In this section, we discuss a specific form of an optimization procedure. For the minimization problem \eqref{rfproblem}, its gradient with respect to $\beta$ is given as follows:
\begin{equation}
    \mathbb{E}\left[ \partial_\zeta l(\beta^\top \phi_M(X), Y) \phi_M(X) + \lambda \beta \right],
\end{equation}
where $\partial_\zeta$ is a partial derivative with respect to the first variable of $l$. Thus, the stochastic gradient with respect to $\beta$ is given by $\partial_\zeta l(\beta^\top \phi_M(X), Y) \phi_M(X) + \lambda \beta$. We note that the update on the $\beta$ parameter corresponds to the update on the function space $\Rfsp$, because a gradient on $\Rfsp$ is given by
\begin{equation}
    \mathbb{E}\left[ \partial_\zeta l(\beta^\top \phi_M(X), Y) \phi_M(X) + \lambda \beta \right]^\top\phi_M.
\end{equation}
We consider the averaged variants of SGD, since it is widely known that gradient averaging gives faster convergence than plain SGD on strongly convex problems \citep{lacoste2012simpler}. The algorithm of random features and averaged SGD is described in Algorithm \ref{alg1}.
\begin{algorithm}
\caption{Random Feature + SGD}         
\label{alg1}      
\begin{algorithmic}
    \REQUIRE{number of features $M$, regularization parameter $\lambda$, number of iterations $T$, learning rates $\{\eta_t\}_{t=1}^T$}, averaging weights $\{\alpha_t\}_{t=1}^{T+1}$
    \ENSURE{classifier $\overline{g}_{T+1}$}
    \STATE Randomly draw feature variables $\omega_1, \ldots, \omega_M \sim \tau$
        \STATE Initialize $\beta_1 \in \mathbb{R}^M$
        \FOR{$t = 1, \dots, T$}
        \STATE Randomly draw samples $(x_t, y_t) \sim \rho$
        \STATE $\beta_{t+1} \leftarrow \beta_t - \eta_t \left( \partial_\zeta l(\beta_{t}^\top \phi_M(x_t), y_t) \phi_M(x_t) + \lambda \beta_t \right)$
        \ENDFOR
        \STATE $\overline{\beta}_{T+1} = \sum_{t=1}^{T+1}\alpha_t \beta_{t}$
        \STATE \textbf{return} $\overline{g}_{T+1} = \overline{\beta}_{T+1}^\top \phi_M$ 
\end{algorithmic}
\end{algorithm}
Following \citet{nitanda2018stochastic}, we set the learning rate and the averaging weight as follows:
\begin{equation}
    \eta_t = \frac{2}{\lambda(\gamma+t)}, \ \ \alpha_t = \frac{2(\gamma+t-1)}{(2\gamma+T)(T+1)},
\end{equation}
where $\gamma$ is an offset parameter for the time index. We note that an averaged iterate $\overline{\beta}_t$ can be updated iteratively as follows:
\begin{align}
    &\overline{\beta}_1 = \beta_1, \\
    &\overline{\beta}_{t+1} = (1-\theta_t) \overline{\beta}_{t} + \theta_t \beta_{t+1}, \ \ \theta_t = \frac{2(\gamma+t)}{(t+1)(2\gamma+t)}.
\end{align}
Using this formula, we can compute the averaged output without storing all internal iterate $(\beta_t)_{t=1}^{T+1}.$
\paragraph{Computational Complexity}
\label{comp}
If we assume the evaluation of a feature map $\varphi(x, \omega)$ to have a constant cost, one iteration in Algorithm \ref{alg1} requires $O(M)$ operations. As a result, one pass SGD on $n$ samples requires $O(Mn)$ computational time. On the other hand, the full kernel method without approximation requires $O(n)$ computations per iteration; thus, the overall computation time is $O(n^2)$, which is much more expensive than random features. 
For the memory requirements, random features needs to store $M$ coefficients, and it does not depend on the sample size $n$. On the other hand, we have to store $n$ coefficients in the full kernel method, so it is also advantageous to use random features in large-scale learning problems.

\section{Error Analysis of Random Features}
\label{errany}
Our primary purpose here is to bound the distance between the hypothesis generated by solving the problems in each space $\Sp$ and $\Rfsp$.
Population risk minimizers in spaces $\Sp, \Rfsp$ are defined as below:
\begin{align}
   \G &= \argmin_{g \in \Sp} \left(\loss(g) + \frac{\lambda}{2} \|g\|_{\Sp}^2 \right), \\
\GG &= \argmin_{g \in \Rfsp} \left(\loss(g) + \frac{\lambda}{2} \|g\|_{\Rfsp}^2 \right).
\end{align}
The uniqueness of minimizers is guaranteed by the regularization term. \\
First, the $L^\infty(\rho_{\mathcal{X}})$-norm is bound between $g_\lambda$ and $g_{M, \lambda}$ when the loss function $l(\cdot, y)$ is Lipschitz continuous. Then, a more concrete analysis is provided when $k$ is a Gaussian kernel.

\subsection{Error Analysis for Population Risk Minimizers}
Before beginning the error analysis, some assumptions about the loss function and kernel function are imposed.
\begin{asm}
\label{as2-1} $l(\cdot, y)$ is convex and $G$-Lipscitz continuous, that is, there exists $G > 0$ such that for any $\zeta, \zeta^\prime \in \mathbb{R}$ and $y \in \mathcal{Y}$,
    \begin{equation}
        | l(\zeta, y) - l(\zeta^\prime, y) | \leq G|\zeta - \zeta^\prime|.
    \end{equation}
\end{asm}

This assumption implies $G$-Lipschitzness of $\mathcal{L}$ with respect to the $\Ltwo{\rho_\mathcal{X}}$ norm, because 
\begin{align}
|\mathcal{L}(g) - \mathcal{L}(h)| &\leq G \int |g(x) - h(x)| \mathrm{d}\rho_{\mathcal{X}}(x) \\
&\leq G\| g - h\|_{\Ltwo{\rho_\mathcal{X}}}
\end{align}
for any $g, h \in \Ltwo{\rho_\mathcal{X}}$. For several practically used losses, such as logistic loss or hinge loss, this assumption is satisfied with $G=1$.\\
To control continuity and boundedness of the induced kernel, the following assumptions are required:
\begin{asm}
\label{as2-3} The function $\varphi$ is continuous and there exists $R > 0$ such that $|\varphi(x, \omega)| \leq R$ for any $x \in \mathcal{X}, \omega \in \Omega$.
\end{asm}
If $k$ is Gaussian and $\varphi$ is its random Fourier features, it is satisfied with $R=1$. This assumption implies $\sup_{x, y \in \mathcal{X}}k(x, y) \leq R^2, \sup_{x, y \in \mathcal{X}}k_M(x, y) \leq R^2$ and it leads to an important relationship $R\| \cdot \|_{\Sp} \geq \| \cdot \|_{L^\infty(\mathcal{X})}, R\| \cdot \|_{\Rfsp} \geq \| \cdot \|_{L^\infty(\mathcal{X})}.$\\
For the two given kernels $k$ and $k_M$, $k+k_M$ is also a positive definite kernel, and its RKHS includes $\Sp$ and $\Rfsp$. The last assumption imposes a specific norm relationship in its combined RKHS of $\Sp$ and $\Rfsp$.  
\begin{asm}
\label{as2-2} Let $\Cbsp$ be RKHS with the kernel function $k+k_M$. Then there exists $0\leq p<1$, and a constant $C(\delta)>0$ depends on $0<\delta \leq 1$ that satisfies, for any $f \in \Cbsp$,
    \begin{equation}
        \|f\|_{L^\infty(\rho_{\mathcal{X}})} \leq C(\delta) \|f\|_{\Cbsp}^{p} \|f\|_{\Ltwo{\rho_\mathcal{X}}}^{1-p}
    \end{equation}
    with probability at least $1-\delta.$
\end{asm}
For a fixed kernel function, the Assumption \ref{as2-2} is a commonly used condition in analysis of kernel methods \citep{steinwart2009optimal, mendelson2010regularization}. It is satisfied, for example, when the eigenfunctions of the kernel are uniformly bounded and the eigenvalues $\{\mu_i\}_i$ decay at the rate $i^{-1/p}$ \citep{mendelson2010regularization}. In Theorem \ref{thm2}, specific $p$ and $C(\delta)$ that satisfy the condition for the case of a Gaussian kernel and its random Fourier features approximation are derived. \\
Here, we introduce our primary result, which bounds the distance between $\G$ and $\GG$ in terms of $L^\infty(\rho_{\mathcal{X}})$-norm. The complete statement, including proof and all constants, are found in Appendix \ref{proof1}.
\begin{thm}
\label{thm1}
Under Assumption \ref{as2-1}-\ref{as2-2}, with probability at least $1-2\delta$ with respect to the sampling of features, the following inequality holds:
\begin{align}
    &\|\G - \GG\|_{L^\infty(\rho_{\mathcal{X}})} \\&\lesssim \left( \frac{R^4 \log\frac{R}{\delta}}{M} \right)^{\min \left\{ \nicefrac{(1-p)}{4}, \nicefrac{1}{8} \right\}} \frac{C(\delta) R G^{\nicefrac{3}{4}} \|g_\lambda\|_\Sp}{\lambda^{\nicefrac{3}{4}}}.
\end{align}
\end{thm}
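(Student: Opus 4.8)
The plan is to apply the interpolation inequality of Assumption~\ref{as2} to $\G-\GG$ and then estimate the two resulting factors separately. Since $\G\in\Sp\subseteq\Cbsp$ and $\GG\in\Rfsp\subseteq\Cbsp$, the difference lies in $\Cbsp$, so Assumption~\ref{as2} gives, with probability at least $1-\delta$,
\[
\|\G-\GG\|_{L^\infty(d\rho_{\mathcal{X}})}\le C_\delta\,\|\G-\GG\|_{\Cbsp}^{p}\,\|\G-\GG\|_{\Ltwo{\rho_\mathcal{X}}}^{1-p}.
\]
It then suffices to bound the $\Cbsp$-norm by a quantity of order $\|\Gtrue\|_{\Sp}$ and the $\Ltwo{\rho_\mathcal{X}}$-distance by a quantity decaying in $M$.

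For the \emph{Hilbert-space norm}: since $k$ and $k_M$ are each dominated by $k+k_M$ as positive definite kernels, the inclusions $\Sp\hookrightarrow\Cbsp$ and $\Rfsp\hookrightarrow\Cbsp$ are norm non-increasing, so $\|\G-\GG\|_{\Cbsp}\le\|\G\|_{\Sp}+\|\GG\|_{\Rfsp}$. Comparing the objective $\loss(\cdot)+\tfrac{\lambda}{2}\|\cdot\|_{\Sp}^2$ at $\G$ with its value at $\Gtrue$ (or at $0$) yields $\|\G\|_{\Sp}\lesssim\|\Gtrue\|_{\Sp}$. For $\|\GG\|_{\Rfsp}$, introduce the Monte-Carlo surrogate $\widehat{g}_*\in\Rfsp$ of $\Gtrue$ obtained by sampling an integral representation of $\Gtrue$ at $\omega_1,\dots,\omega_M$, use optimality of $\GG$ in $\Rfsp$ to get $\loss(\GG)+\tfrac{\lambda}{2}\|\GG\|_{\Rfsp}^2\le\loss(\widehat{g}_*)+\tfrac{\lambda}{2}\|\widehat{g}_*\|_{\Rfsp}^2$, bound $\loss(\widehat{g}_*)-\loss(\Gtrue)\le L\|\widehat{g}_*-\Gtrue\|_{\Ltwo{\rho_\mathcal{X}}}$ by Assumption~\ref{as1}, and control $\|\widehat{g}_*-\Gtrue\|_{\Ltwo{\rho_\mathcal{X}}}$ and $\|\widehat{g}_*\|_{\Rfsp}$ by concentration of a Monte-Carlo average (using the boundedness from Assumption~\ref{as3}). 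This gives $\|\GG\|_{\Rfsp}\lesssim\|\Gtrue\|_{\Sp}$ up to a term vanishing as $M\to\infty$, hence $\|\G-\GG\|_{\Cbsp}\lesssim\|\Gtrue\|_{\Sp}$.

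The \emph{$\Ltwo{\rho_\mathcal{X}}$ distance} is the technical core, and the step I expect to require the most care. I would write the first-order optimality conditions as $\lambda\G=-T\nabla\loss(\G)$ and $\lambda\GG=-T_M\nabla\loss(\GG)$, with $T,T_M$ the integral operators of $k,k_M$ on $\Ltwo{\rho_\mathcal{X}}$ and $\nabla\loss$ a measurable selection of the $L^2$-subgradient of $\loss$, which is bounded by $L$ in $L^\infty(d\rho_{\mathcal{X}})$ by Assumption~\ref{as1}. Subtracting these identities and pairing with $\G-\GG$, the monotonicity of the subgradient of the convex functional $\loss$ makes the term in $\nabla\loss(\G)-\nabla\loss(\GG)$ contribute with a favourable sign, so the distance is governed by the operator perturbation $\|T-T_M\|$, which by operator Bernstein is $\lesssim\sqrt{R^{4}\log(R/\delta)/M}$ with probability $1-\delta$. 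The obstacle is precisely that $\Rfsp\not\subseteq\Sp$: $\G-\GG$ need not lie in $\mathrm{ran}(T^{1/2})$, so a direct "perturbation of one strongly convex problem'' argument does not apply, and the loss being merely Lipschitz (not smooth) forbids any gradient-Lipschitz shortcut. One therefore has to work inside $\Cbsp$, replace $\GG$ by its $T^{1/2}$-image (controlled through $\|T^{1/2}-T_M^{1/2}\|\le\|T-T_M\|^{1/2}$), and feed the resulting $L^\infty$-type quantities back through Assumption~\ref{as2}; this is where the fractional powers of $R^4\log(R/\delta)/M$ enter, with a prefactor polynomial in $L/\lambda$, $R$ and $\|\Gtrue\|_{\Sp}$.

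Finally, substituting the two estimates into the interpolation inequality and tracking constants produces the claimed bound. The exponent $\min\{(1-p)/4,\,1/8\}$ emerges from this bookkeeping --- in which the interpolation inequality is effectively used more than once (on $\G-\GG$ itself and again inside the $\Ltwo{\rho_\mathcal{X}}$ estimate) --- while the prefactor collects $C_\delta$, $R$, the Lipschitz constant $L$, the regularization parameter through $\lambda^{-3/4}$, and $\|\Gtrue\|_{\Sp}$. The hard part, where essentially all the work lies, is the $\Ltwo{\rho_\mathcal{X}}$ estimate, because of the mismatch between the two RKHSs and the non-smoothness of the loss.
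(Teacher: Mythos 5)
Your toolbox (interpolation via Assumption~\ref{as2}, operator Bernstein for $T-T_M$, the square-root perturbation bound) is the right one, but the argument has a genuine gap exactly where you locate the work: the $\Ltwo{\rho_\mathcal{X}}$ estimate for $\G-\GG$. Subtracting the optimality conditions $\lambda\G=-T\nu$ and $\lambda\GG=-T_M\nu_M$ and pairing with $\G-\GG$ leaves the subgradient difference in the form $\langle \G-\GG,\,T(\nu-\nu_M)\rangle_{\Ltwo{\rho_\mathcal{X}}}$, with the operator $T$ interposed, so the pointwise monotonicity of $\partial_\zeta l$ gives no sign; you acknowledge this obstruction, but the proposed remedy (pass to the $T^{1/2}$-image inside $\Cbsp$ and ``feed back through Assumption~\ref{as2}'') is not carried out, and it is precisely the unproven step. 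Moreover, the final assembly does not deliver the stated exponent even if one grants the $L^2$ rate that this machinery actually yields: the paper notes that $\|\G-\GG\|_{\Ltwo{\rho_\mathcal{X}}}=O(M^{-1/8})$, and a single interpolation applied directly to $\G-\GG$ with a bounded $\Cbsp$-norm then gives only $M^{-(1-p)/8}$, strictly worse than $M^{-\min\{(1-p)/4,\,1/8\}}$ for every $p>0$; to recover the claim along your route you would need an $L^2$ rate of order $M^{-1/4}$ for general Lipschitz losses, which your sketch does not establish. A smaller but real issue: your Monte-Carlo surrogate $\widehat g_*$ of $g_*$ needs concentration of $\tfrac1M\sum\beta(\omega_i)\varphi(\cdot,\omega_i)$, which requires boundedness (or higher moments) of the representer $\beta$, not guaranteed by $\beta\in L^2(d\tau)$; and the resulting bound on $\|\GG\|_{\Rfsp}$ carries an unavoidable $(\mathcal{L}(g_*)/\lambda)^{1/2}$ term, so $\|\G-\GG\|_{\Cbsp}\lesssim\|g_*\|_\Sp$ is not quite what one gets.

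The paper avoids both problems by never bounding $\|\G-\GG\|_{\Ltwo{\rho_\mathcal{X}}}$ directly. Lemma~\ref{lem1} (from Lemma~\ref{lem3} and Proposition~\ref{b}, with the explicit map $\widetilde g=\mathcal{T}_M^{1/2}\mathcal{P}_{\mathrm{Ker}(T_M)^\bot}\mathcal{T}^{-1/2}\mathcal{P}_{\mathrm{Ker}(\Sigma)^\bot}g$, which also sidesteps the bounded-representer issue) produces surrogates $\GGG\in\Rfsp$ of $\G$ and $\GGGG\in\Sp$ of $\GG$ with $L^2$ error $\xi\sim(R^4\log(R/\delta)/M)^{1/4}$ and non-increasing RKHS norms. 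Then $\lambda$-strong convexity of both regularized objectives combined with $L$-Lipschitzness of $\loss$ gives $\|\GG-\GGG\|_{\Rfsp}^2\lesssim \tfrac{L\xi}{\lambda}\bigl(\|g_*\|_\Sp+(\mathcal{L}(g_*)/\lambda)^{1/2}+\cdots\bigr)$, and this piece is converted to $L^\infty$ by $R\|\cdot\|_{\Rfsp}$ alone, contributing the $\xi^{1/2}\sim M^{-1/8}$ branch; Assumption~\ref{as2} is applied only to $\G-\GGG$, whose $L^2$ distance is the full $\xi\|g_*\|_\Sp$, contributing the $\xi^{1-p}\sim M^{-(1-p)/4}$ branch. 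The $\min\{(1-p)/4,1/8\}$ exponent is the maximum of these two distinct terms; that two-term split, rather than a single interpolation of $\G-\GG$, is the missing idea in your proposal.
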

\par
The resulting error rate is $O(M^{-\min \left\{ \nicefrac{(1-p)}{4}, \nicefrac{1}{8} \right\}})$. It can be easily shown that a consistent error rate of $O(M^{-\nicefrac{1}{8}})$ is seen for $\Ltwo{\rho_{\mathcal{X}}}$-norm without Assumption \ref{as2-2}.

\paragraph{Comparison to Previous Results} 
The distance between empirical risk minimizers of SVM (i.e., $l$ is hinge loss) were studied in terms of the error induced by Gram matrices by \citet{cortes2010impact, sutherland2015error}. Considering $K$ and $K_M$ to be Gram matrices of kernel $k$ and $k_M$, respectively, they showed that $\|\G - \GG\|_{L^\infty(\rho_{\mathcal{X}})} \lesssim O(\|K-K_M\|_{\mathrm{op}}^{\nicefrac{1}{4}})$, where $\|\cdot\|_{\mathrm{op}}$ is an operator norm, defined in Appendix \ref{notation}. Because the Gram matrix can be considered as the integral operator on the empirical measure, we can apply Lemma \ref{lem3} and obtain $\|K-K_M\|_{\mathrm{op}} \lesssim O(M^{-\nicefrac{1}{2}})$, so the resulting rate is $O(M^{-\nicefrac{1}{8}})$. This coincides with our result, because when $\rho_{\mathcal{X}}$ is an empirical measure, Assumption 3 holds with $p = 0$. 
From this perspective, our result is an extension of these previous results, because we treat the more general Lipschitz loss function $l$ and general measure $\rho_{\mathcal{X}}$ including empirical measure. Although it is relatively easy to derive the infinite norm bound in those finite dimentional case, more careful deriviation is needed in our setting (infinite dimentional case) and our analysis is novel.\\
The case of squared loss was studied by \citet{rudi2017generalization, carratino2018learning}. In particular, in Lemma 8 of \citet{rudi2017generalization}, the $L^2$ distance between $g_\lambda$ and $g_{M, \lambda}$ is shown as $O(M^{-1/2})$ (without decreasing $\lambda$). While this is a better rate than ours, our theory covers a wider class of loss functions, and a similar phenomenon is observed in the case of empirical risk minimizers for the squared loss and hinge loss \citep{cortes2010impact}.\\
Approximation of functions in $\Sp$ by functions in $\Rfsp$ is also considered by \citet{bach2017equivalence}, but this result cannot be applied here because $g_{M, \lambda}$ is not the function closest to $g_\lambda$ in $\Rfsp$.
Finally, we note that our result cannot be obtained from the approximation analysis of Lipschitz loss functions \citep{rahimi2009weighted, li2018toward}, where the rate was shown to be $O(M^{-1/2})$ under several assumptions, because the closeness of the loss values does not imply that of the hypothesis.

\subsection{Further Analysis for Gaussian Kernels}
The following theorem shows that if $k$ is a Gaussian kernel and $k_M$ is its random Fourier features approximation, then the norm condition in Assumption \ref{as2-2} is satisfied for any $0<p<1$.
\begin{thm}
\label{thm2}

Assume $\mathrm{supp}(\rho_\mathcal{X}) \subset \mathbb{R}^d$ is a bounded set and $\rho_{\mathcal{X}}$ has a density with respect to Lebesgue measure, which is uniformly bounded away from 0 and $\infty$ on $\mathrm{supp}(\rho_\mathcal{X})$. Let $k$ be a Gaussian kernel and $\Sp$ be its RKHS; then, for any $m\geq d/2$, there exists a constant $C_{m, d}>0$ such that 
\begin{equation}
    \label{eqint1}
    \| f \|_{L^\infty(\rho_{\mathcal{X}})} \leq C_{m, d} \|f\|_{\Sp}^{\nicefrac{d}{2m}} \|f\|_{\Ltwo{\rho_\mathcal{X}}}^{1-\nicefrac{d}{2m}}
\end{equation}
for any $f \in \Sp$. Also, for any $M\geq 1$, let $k_M$ be a random Fourier features approximation of $k$ with $M$ features and $\Cbsp$ be a RKHS of $k+k_M$. Then, with probability at least $1-\delta$ with respect to a sampling of features, 
\begin{equation}
   \label{eqint2}
    \| f \|_{L^\infty(\rho_{\mathcal{X}})} \leq C_{m, d} \left(1 + \frac{1}{\delta} \right)^{\nicefrac{d}{4m}} \|f\|_{\Cbsp}^{\nicefrac{d}{2m}} \|f\|_{\Ltwo{\rho_\mathcal{X}}}^{1-\nicefrac{d}{2m}}
\end{equation}
for any $f \in \Cbsp$.
\end{thm}
We note that the norm relation of the Gaussian RKHS \eqref{eqint1} is a known result of \citet{steinwart2009optimal} and our analysis extends this to the combined RKHS $\Cbsp$.
The proof is based on the following fact: \\
Let us denote $\mathrm{supp}(\rho_{\mathcal{X}})$ by $\mathcal{X}^\prime$. First, from \citet{steinwart2009optimal} we have
\begin{align}
    \left[ L^2(\mathcal{X}^\prime), W^m(\mathcal{X}^\prime)\right]_{d/2m, 1} = B_{2, 1}^{d/2} (\mathcal{X}^\prime)
\end{align}
and there exists a constant $C_1>0$ such that
\begin{align}
    \| f \|_{\left[ L^2(\mathcal{X}^\prime), W^m(\mathcal{X}^\prime)\right]_{d/2m, 1}}
    \leq C_1 \|f\|_{W^m(\mathcal{X}^\prime)}^{\nicefrac{d}{2m}} \|f\|_{L^2(\mathcal{X}^\prime)}^{1-\nicefrac{d}{2m}},
\end{align}
where $W^m(\mathcal{X}^\prime)$ and $B_{2, 1}^{d/2} (\mathcal{X}^\prime)$ denote Sobolev and Besov space, respectively, and $[E, F]_{\theta, r}$ denotes real interpolation of Banach spaces $E$ and $F$ (see \citet{steinwart2008support}). 
Also, by Sobolev's embedding theorem for Besov space, $B_{2, 1}^{d/2} (\mathcal{X}^\prime)$ can be continuously embedded in $L^\infty(\mathcal{X}^\prime)$. Finally, from the condition on $\rho_{\mathcal{X}}$, there exists a constant $C_2>0$ such that
\begin{align}
    &\|f\|_{L^\infty(\rho_\mathcal{X})} = \|f\|_{L^\infty(\mathcal{X}^\prime)},\\
    &\|f\|_{\Ltwo{\rho_\mathcal{X}}} \geq C_2 \|f\|_{L^2({\mathcal{X}^\prime})}.
\end{align}
Therefore, if it can be shown that RKHS $\Cbsp$ is continuously embedded in $W^m(\mathcal{X}^\prime)$, the norm relation \eqref{eqint2} holds.
The complete proof is found in Appendix \ref{proof2}.
Using this theorem, it can be shown that in the case of a Gaussian kernel and its random Fourier features approximation, Assumption \ref{as2-2} is satisfied with $p=1/2$ and $C(\delta) = C_{d, d}(1+1/\delta)^{1/4}$, and the resulting rate in Theorem \ref{thm1} is $O(M^{-\nicefrac{1}{8}}).$

\section{Main Result}
\label{mainres}

In this section, we show that learning classification problems with SGD and random features achieve the exponential convergence of the expected classification error under certain conditions. Before providing our results, several assumptions are imposed on the classification problems and loss function. The first is the smoothness of the loss function.
\begin{asm}
\label{as2-8}
$l(\cdot, y)$ is differentiable and $L$-Lipschitz smooth. That is, for any $\zeta, \zeta^\prime \in \mathbb{R}$ and $y \in \mathcal{Y}$,
\begin{align}
    |\partial_{\zeta} l(\zeta, y) - \partial_{\zeta} l(\zeta^\prime, y)|   \leq L |\zeta - \zeta^\prime|.
\end{align}
\end{asm}
Let $l(g, z)$ denote $l(g(x), y)$ for $z=(x, y)$ and $\partial_g l(g ,z)$ denote the gradient of $l(g ,z)$ with respect to $g \in \Sp$. Combining Assumption \ref{as2-3} and \ref{as2-8} yields $LR^2$-smoothness in $\Sp$, since
\begin{align}
    &\langle \partial_g l(g,z) - \partial_g l(g^\prime,z) , g-g^\prime\rangle_\Sp \\&= \langle (\partial_\zeta l(g(x),y) - \partial_\zeta l(g^\prime(x),y))k(\cdot, x), g-g^\prime \rangle_\Sp \\
    &\leq LR^2 \|g-g^\prime\|^2_\Sp
\end{align}
holds for any $z \in \mathcal{X} \times \mathcal{Y}$ and it is known as an equivalent condition of smoothness by Theorem 2.1.5 of \citet{nesterov2014introductory}. The second is the margin condition on the conditional label probability.

\begin{asm}
\label{as2-5}
The strong low-noise condition holds:
    \begin{equation}
                \exists \delta \in \left(0, 1/2 \right), \ \  \left| \rho(Y=1|x) - 1/2 \right| > \delta \ \ (\rho_{\mathcal{X}}\text{-}a.s.)
    \end{equation}
\end{asm}

The third is the condition on the \textit{link function} $h_*$ \citep{bartlett2006convexity, zhang2004statistical}, which connects the hypothesis space and the probability measure:
\begin{equation}
     h_{*}(\mu) = \argmin_{\alpha \in \mathbb{R}} \left\{ \mu \phi(\alpha) + (1-\mu) \phi(-\alpha) \right\}.
\end{equation}
Its corresponding value is denoted by $l_*$:
\begin{equation}
        l_{*}(\mu) = \min_{\alpha \in \mathbb{R}} \left\{ \mu \phi(\alpha) + (1-\mu) \phi(-\alpha) \right\}.
\end{equation}
It is known that $l_*$ is a concave function \citep{zhang2004statistical}. Although $h_*(\mu)$ may not be uniquely determined nor well-defined in general, the following assumption ensures these properties.
\begin{asm}
\label{as2-6}
$\rho(1|X)$ takes values in $(0,1)$, $\rho_\mathcal{X}$-almost surely; $\phi$ is differentiable and $h_*$ is well-defined, differentiable, monotonically increasing, and invertible over $(0, 1)$.
Moreover, it follows that
\begin{equation}
    \sgn(\mu-1/2) = \sgn(h_*(\mu)).
\end{equation}
\end{asm}
For logistic loss, $h_*(\mu) = \log (\mu/(1-\mu))$, and the above condition is satisfied.
Next, following \citet{zhang2004statistical}, we introduce Bregman divergence for concave function $l_*$ to ensure the uniqueness of Bayes rule $g_*$:
\begin{equation}
    d_{l_*}(\eta_1, \eta_2) = -l_*(\eta_2) + l_*(\eta_1) + l_*^\prime (\eta_1) (\eta_2 - \eta_1).
\end{equation}
\begin{asm}
\label{as2-7}
Bregman divergence $d_{l_*}$ derived by $l_*$ is positive, that is, $d_{l_*}(\eta_1, \eta_2)=0$ if and only if $\eta_1=\eta_2$. For the expected risk $\mathcal{L}$, a unique Bayes rule $g_*$ (up to zero measure sets) exists in $\Sp$. 
\end{asm}
For logistic loss, it is known that $d_{l_*}$ coincides with Kullbuck-Leibler divergence, and thus, the positivity of the divergence holds. If $\phi$ is differentiable and $h_*$ is differentiable and invertible, the excess risk can be expressed using $d_{l_*}$ \citep{zhang2004statistical}:
\begin{equation}
    \mathcal{L}(g) - \mathcal{L}(g_*) = \mathbb{E}_X [d_{l_*}(h_*^{-1}(g(X)), \rho(1|X))]. 
\end{equation}
So, combining Assumptions \ref{as2-6} and  \ref{as2-7} implies that Bayes rule $g_*$ is equal to $h_*(\rho(1|X))$, $\rho_{\mathcal{X}}$-almost surely and contained in the original RKHS $\Sp$.
Finally, we introduce the following notation:
\begin{equation}
    m(\delta) = \max \{ h_*(0.5+\delta), |h_*(0.5-\delta)|\}.
\end{equation}
Using this notation, Assumption \ref{as2-5} can be reduced to the Bayes rule condition, that is, $|g_*(X)| \geq m(\delta)$, $\rho_{\mathcal{X}}$-almost surely. For logistic loss, we have $m(\delta)=\log((1+2\delta)/(1-2\delta))$.
Under these assumptions and notations, the exponential convergence of the expected classification error is shown.
\begin{thm}
\label{thm3}
Suppose Assumptions \ref{as2-1}--\ref{as2-7} hold. There exists a sufficiently small $\lambda>0$ such that the following statement holds:\\
Taking the number of random features $M$ that satisfies
\begin{equation}
        M \gtrsim \left(\frac{R^4 C^4(\delta^\prime)  G^3 \|g_*\|_{\mathcal{H}}^4}{\lambda^3 m^4(\delta)}\right)^{\max \left\{\frac{1}{1-p}, 2\right\}} R^4 \log \frac{R}{\delta^\prime}.
\end{equation}
Consider Algorithm \ref{alg1} with $\eta_t = \frac{2}{\lambda(\gamma+t)}$ and $\alpha_t = \frac{2(\gamma+t-1)}{(2\gamma+T)(T+1)}$ where $\gamma$ is a positive value such that $\|g_1\|_{\Rfsp} \leq (2\eta_1+1/\lambda)GR$ and $\eta_1 \leq \min \{1/LR^2, 1/2\lambda \}.$ Then, with probability $1-2\delta^\prime$, for sufficiently large $T$ such that
\begin{equation}
    \max \left\{ \frac{36G^2R^2}{\lambda^2(2\gamma+T)}, \frac{\gamma (\gamma-1) \|g_1 - g_{M, \lambda} \|_{\Rfsp}^2}{(2\gamma+T)(T+1)} \right \} \leq \frac{m^2(\delta)}{64R^2},
\end{equation}
we have the following inequality for any $t\geq T$:
\begin{align}
    \mathbb{E} \left[ \mathcal{R}(\overline{g}_{t+1}) - \mathcal{R}(\mathbb{E}[Y|x]) \right]
    \leq 2 \exp \left( -\frac{\lambda^2 (2\gamma + t) m^2(\delta)}{2^{12} \cdot 9 G^2R^4}  \right).
\end{align}
\end{thm}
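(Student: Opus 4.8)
The plan is to reduce the classification error to the loss-function excess risk via a calibration-type inequality, then control the loss excess risk of the SGD iterate on $\Rfsp$ by combining (i) the optimization error of averaged SGD toward $\GG$, (ii) the statistical/approximation gap between $\GG$ and $\G$ controlled by Theorem \ref{thm1}, and (iii) the regularization bias between $\G$ and the Bayes rule $g_*$. The exponential rate will come from the strong low-noise condition (Assumption \ref{as5}): because $|g_*(X)|\geq m(\delta)$ $\rho_{\mathcal{X}}$-a.s., a hypothesis $g$ misclassifies only where it differs from $g_*$ by at least roughly $m(\delta)$ in absolute value, so $\mathcal{R}(g)-\mathcal{R}(g_*)$ can be bounded by $\mathbb{P}[|g(X)-g_*(X)|\geq m(\delta)/2]$, which in turn is controlled by $\|g-g_*\|_{L^\infty(d\rho_{\mathcal{X}})}$; once that quantity drops below $m(\delta)/2$ the probability is zero, but before that we use a moment/exponential bound on the SGD error in $L^\infty(d\rho_{\mathcal{X}})$ (through $R\|\cdot\|_{\Rfsp}\geq\|\cdot\|_{L^\infty(\mathcal{X})}$, Assumption \ref{as3}) to get the stated $2\exp(-\,\cdot\,)$ form.

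The concrete steps, in order, are: First, invoke the averaged-SGD analysis of \cite{nitanda2018stochastic} adapted to the finite-dimensional problem \eqref{rfproblem}: since $l(\cdot,y)$ is $L$-Lipschitz and $\|\phi_M(x)\|_2\leq R$ by Assumption \ref{as3}, the stochastic gradients have bounded second moment ($\lesssim L^2R^2+\lambda^2\|\beta\|_2^2$), and with the prescribed $\eta_t,\alpha_t,\gamma$ one obtains $\mathbb{E}\|\overline\beta_{T+1}-\beta_{M,\lambda}\|_{\Rfsp}^2\lesssim \frac{L^2R^2}{\lambda^2(2\gamma+T)}+\frac{\gamma(\gamma-1)\|g_1-\GG\|_{\Rfsp}^2}{(2\gamma+T)(T+1)}$, which is exactly the quantity appearing in the hypothesis on $T$. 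Second, choose $\lambda$ small enough that the regularization bias $\|\G-g_*\|_{L^\infty(d\rho_{\mathcal{X}})}$ is at most $m(\delta)/4$; this uses the calibration machinery of Section \ref{mainres} (Assumptions \ref{as6}–\ref{as7}, the Bregman representation of the excess loss and the fact that $\loss(\G)+\frac\lambda2\|\G\|_\Sp^2\leq\loss(g_*)+\frac\lambda2\|g_*\|_\Sp^2$ forces $\loss(\G)-\loss(g_*)\leq\frac\lambda2\|g_*\|_\Sp^2\to0$, hence $\G\to g_*$ in a sense strong enough, via Assumption \ref{as2}-type interpolation, to control the sup norm). Third, feed the chosen $\lambda$ into Theorem \ref{thm1} and pick $M$ as in the statement so that $\|\G-\GG\|_{L^\infty(d\rho_{\mathcal{X}})}\leq m(\delta)/4$ with probability $1-\delta'$; note the exponent $\max\{1/(1-p),2\}$ on the bracket is precisely the inverse of $\min\{(1-p)/4,1/8\}$ times $4$, matching the rate in Theorem \ref{thm1}. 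Fourth, combine: on the good event for the features, $\|\GG-g_*\|_{L^\infty(d\rho_{\mathcal{X}})}\leq m(\delta)/2$, so $g_{t+1}$ misclassifies only where $|g_{t+1}(X)-\GG(X)|>m(\delta)/2$, and one bounds $\mathbb{E}[\mathcal{R}(g_{t+1})-\mathcal{R}(g_*)]$ by a Markov/Chernoff argument on $\|g_{t+1}-\GG\|_{\Rfsp}^2\leq$ (SGD error), converting the second-moment bound plus the $T$-condition into the exponential $2\exp(-\lambda^2(2\gamma+t)m^2(\delta)/(2^{12}\cdot 9L^2R^4))$.

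The main obstacle is Step 4, turning a second-moment bound on the SGD error into an \emph{exponential} tail on the classification error: a naive Markov inequality on $\mathbb{E}\|g_{t+1}-\GG\|_{\Rfsp}^2$ only yields a polynomial $O(1/t)$ rate. The trick — following \cite{pillaud2018exponential, nitanda2018stochastic} — is that under the strong low-noise margin, $\mathcal{R}(g_{t+1})-\mathcal{R}(g_*)$ is itself dominated (up to constants) by $\frac{L}{m(\delta)}$ times the loss excess risk $\loss(g_{t+1})-\loss(\GG)$, \emph{and} the loss excess risk is $\lambda$-strongly-convex-bounded by $\frac\lambda2\|g_{t+1}-\GG\|_{\Rfsp}^2$; iterating the SGD contraction with the exact $\eta_t=2/(\lambda(\gamma+t))$ schedule and the $T$-threshold (which guarantees the error has already entered the region where the margin kicks in) produces geometric-in-$t$ decay rather than $1/t$ decay, because once $\|g_{t+1}-\GG\|_{\Rfsp}\lesssim m(\delta)/R$ the classification error vanishes deterministically and only a small-probability escape event contributes, whose probability decays exponentially. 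Carefully tracking that the constants $2^{12}\cdot 9$ and the three separate "budget" splits of $m(\delta)$ (for $\G$–$g_*$, for $\GG$–$\G$, and for $g_{t+1}$–$\GG$) stay consistent is the bookkeeping-heavy part, but the structural argument is the margin-to-exponential reduction.
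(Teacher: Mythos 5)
Your high-level skeleton (budget-splitting $m(\delta)$ between $\|g_\lambda-g_*\|_{L^\infty}$, $\|\GG-\G\|_{L^\infty}$ via Theorem \ref{thm1}, and $\|g_{t+1}-\GG\|_{\Rfsp}$; then observing that once the total is below $m(\delta)$ the sign of $g_{t+1}$ agrees with $\sgn(g_*)$ $\rho_{\mathcal{X}}$-a.s., so the excess classification error is bounded by the probability of the bad event) is exactly the paper's structure. But there is a genuine gap at the step you yourself flag as the main obstacle, and your proposed fix does not close it. The exponential-in-$t$ rate in the paper does \emph{not} come from a calibration inequality, from $\lambda$-strong convexity of the regularized loss, or from any ``geometric-in-$t$ contraction'' of the SGD recursion --- with the step size $\eta_t = 2/(\lambda(\gamma+t))$ the optimization error $\|\mathbb{E}[g_{T+1}]-\GG\|_{\Rfsp}^2$ decays only polynomially (Proposition C of \cite{nitanda2018stochastic}, restated as Proposition \ref{prop4}), and a Markov/Chebyshev argument on any second-moment bound can only ever give a polynomial rate, as you note. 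What actually supplies the exponential factor is a separate high-probability \emph{concentration} result: $\|g_{T+1}-\mathbb{E}[g_{T+1}]\|_{\Rfsp}$ has sub-Gaussian tails of order $\exp\bigl(-c\,\lambda^2(2\gamma+T)\epsilon^2/(L^2R^2)\bigr)$, proved in \cite{nitanda2018stochastic} (Propositions 2 and D, restated as Proposition \ref{prop5}) via a bounded-martingale-difference argument exploiting the $L$-Lipschitz loss and the bound $\|\phi_M(x)\|_2\leq R$. The paper's proof then sets $\epsilon = m(\delta)/(8R)$, uses the $T$-condition to guarantee $\|\mathbb{E}[g_{T+1}]-\GG\|_{\Rfsp}\leq m(\delta)/(8R)$ deterministically, concludes that $g_{T+1}$ is a Bayes classifier except on an event of probability at most $2\exp\bigl(-\lambda^2(2\gamma+T)m^2(\delta)/(2^{12}\cdot 9 L^2R^4)\bigr)$, and bounds the expected excess classification error by that probability. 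Your write-up presupposes exactly this tail bound (``only a small-probability escape event contributes, whose probability decays exponentially'') without establishing it, and the mechanisms you do invoke cannot substitute for it.

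A secondary point: the calibration-type reduction in your opening (bounding $\mathcal{R}(g)-\mathcal{R}(g_*)$ by loss excess risk times $L/m(\delta)$) is neither needed nor helpful here; the argument is purely sign-based, with the excess classification error being either $0$ (on the good event) or at most $1$ (on the bad event). Also note the high-probability-in-features bookkeeping: the $1-2\delta'$ in the statement covers the event of Theorem \ref{thm1} (which itself absorbs the Assumption \ref{as2} event through $C_{\delta'}$), and is separate from the exponential tail over the data stream --- your draft does not cleanly separate these two sources of randomness.
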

The complete statement and proof are given in Appendix \ref{proof3}. We note that although a certain number of features are required to achieve the exponential convergence, the resulting rate does not depend on $M$. In contrast to this, when one considers the convergence rate of the loss function, its rate depends on $M$ in general \citep{rudi2017generalization, carratino2018learning, rahimi2009weighted, li2018toward}. From this fact, we can show that random features can save computational cost in a relatively small classification error regime. A detailed discussion is presented later.
\paragraph{Dependence of $\gamma$ and $\lambda$ on $T$}
As we can see in the condition inequality on $T$, $\gamma$ adjusts the step size and consequently affects $T$, when the exponential convergence phase starts. Indeed, there is a trade-off relation in $T$, that is, the first part of $\max$ in the condition on $T$ is $O(1/\gamma)$ and the second part is $O(\gamma)$.\\
In addition, we note that when we apply non-averaged SGD, the dependence of $\lambda$ on $T$ is worse than our averaged SGD, although similar exponential convergence can be shown in such case. This comes from the fact that gradient averaging achieves better dependence on the strongly convex parameter. For further details, see \citet{nitanda2018stochastic, lacoste2012simpler}.\\

As a corollary, we show a simplified result when learning with random Fourier features approximation of a Gaussian kernel and logistic loss, which can be obtained by setting $m(\delta)=\log((1+2\delta)/(1-2\delta))$, $R=G=1$ and $L=1/4$ in Theorem \ref{thm3} and applying Theorem \ref{thm2}. In addition, we can specify a required $\lambda$ to achieve the convergence in this setting. The complete statement and proof are given in Appendix \ref{proof4}.

\begin{cor}
Assume $\mathrm{supp}(\rho_\mathcal{X}) \subset \mathbb{R}^d$ is a bounded set and $\rho_{\mathcal{X}}$ has a density with respect to Lebesgue measure, which is uniformly bounded away from 0 and $\infty$ on $\mathrm{supp}(\rho_\mathcal{X})$. 
Let $k$ be a Gaussian kernel and $l$ be logistic loss. Under Assumption \ref{as2-5}-\ref{as2-7}, the following statement holds:\\
Taking a regularization parameter $\lambda$ and a number of random features $M$ that satisfies
\begin{align}
    &\lambda \lesssim \log^3 \frac{1+2\delta}{1-2\delta} \cdot \frac{1}{ (2+e^{R\|g_*\|_\Sp}+e^{-R\|g_*\|_\Sp})\|g_*\|_\Sp^3},\\
        &M \gtrsim \left(\frac{\left(1+\frac{1}{\delta^\prime}\right)  \|g_*\|_{\mathcal{H}}^4}{\lambda^3 \log^4 \frac{1+2\delta}{1-2\delta}}\right)^2 \log \frac{1}{\delta^\prime}.
\end{align}
Consider Algorithm \ref{alg1} with $\eta_t = \frac{2}{\lambda(\gamma+t)}$ and $\alpha_t = \frac{2(\gamma+t-1)}{(2\gamma+T)(T+1)}$ where $\gamma$ is a positive value such that $\|g_1\|_{\Rfsp} \leq (2\eta_1+1/\lambda)$ and $\eta_1 \leq \min \{4, 1/2\lambda \}.$ Then, with probability $1-2\delta^\prime$, for a sufficiently large $T$ such that
\begin{align}
\max \left\{ \frac{36}{\lambda^2(2\gamma+T)}, \frac{\gamma (\gamma-1) \|g_1 - g_{M, \lambda} \|_{\Rfsp}^2}{(2\gamma+T)(T+1)} \right \} &\\\leq \frac{\log^2 \frac{1+2\delta}{1-2\delta}}{64},&
\end{align}
we have the following inequality for any $t\geq T$:
\begin{align}
    &\mathbb{E} \left[ \mathcal{R}(\overline{g}_{t+1}) - \mathcal{R}(\mathbb{E}[Y|x]) \right] \\
   &\leq 2 \exp \left( -\frac{\lambda^2 (2\gamma + t)}{2^{12} \cdot 9} \log^2 \frac{1+2\delta}{1-2\delta} \right).
\end{align}
\end{cor}
\paragraph{Computational Viewpoint}
As shown in Theorem \ref{thm3}, once a sufficient number of features are sampled, the convergence rate of the excess classification error does not depend on the number of features $M$. This is unexpected because when considering the convergence of the loss function, the approximation error induced by random features usually remains \citep{rudi2017generalization, li2018toward, rahimi2009weighted}. Thus, to obtain the best convergence rate, we have to sample more $M$ as the sample size $n$ increases. \\
From this fact, it can be shown that to achieve a relatively small classification error, learning with random features is indeed more computationally efficient than learning with a full kernel method without approximation. As shown in Section \ref{comp}, if one runs SGD in Algorithm \ref{alg1} with more than $M$ iterations, both the time and space computational costs of a full kernel method exceed those of random features. In particular, if one can achieve a classification error $\epsilon$ such that
\begin{equation}
    \epsilon \lesssim \exp \left( - \log^{2\max\{\nicefrac{(1+p)}{(1-p)}, 3\}} m(\delta) \right),
\end{equation}
then the required number of iterations $n$ exceeds the required number of features $M$ in Theorem \ref{thm3}, and the overall computational cost become larger in a full kernel method. Theoretical results which suggest the efficiency of random features in terms of generalization error have only been derived in the regression setting \citep{rudi2017generalization, carratino2018learning}; this is the first time the superiority of random features has been demonstrated in the classification setting. Moreover, this result shows that an arbitrary small classification error is achievable as long as there is a sufficient number of random features unlike the regression setting where a required number of random features depend on the target accuracy.

\section{Experiments}
\label{exp}
In this section, the behavior of the SGD with random features studied on synthetic datasets is described. We considered logistic loss as a loss function, a Gaussian kernel as an original kernel function, and its random Fourier features as an approximation method. Two-dimensional synthetic datasets were used, as shown in Figure \ref{fig1}. The dataset support is composed of four parts: $[-1.0, -0.1] \times [-1.0, -0.1], [-1.0, -0.1] \times [0.1, 1.0],[0.1, 1.0] \times [-1, -0.1],[0.1, 1.0] \times [0.1, 1.0]$. For two of them, the conditional probability is $\rho(1|X) = 0.8$, and for the other two, $\rho(1|X) = 0.2$. This distribution satisfies the strong low-noise condition with $\delta = 0.3$. For hyper-parameters, we set $\gamma = 500$ and $\lambda = 0.001$. SGD was run 100 times with 12,000 iterations and the classification error and loss function were calculated on 100,000 test samples. The average of each run is reported with standard deviations.\\
\begin{figure}[t]
    \centering
    \includegraphics[clip, width=5cm]{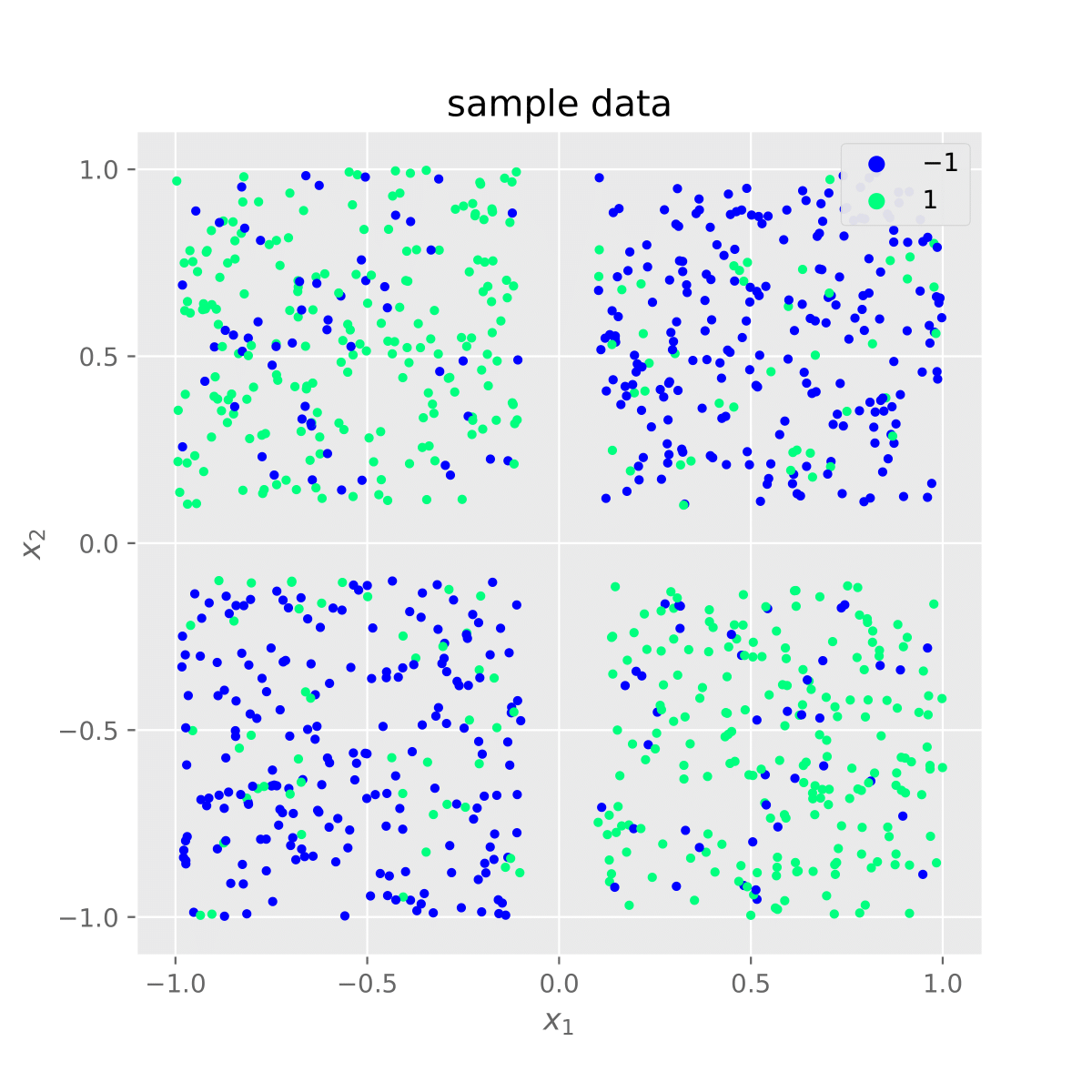}
    \caption{Subsample of data used in the experiment.}
    \label{fig1}
\end{figure}
\begin{figure}[t]
  \begin{minipage}[b]{0.48\linewidth}
    \centering
    \includegraphics[clip, width=\linewidth]{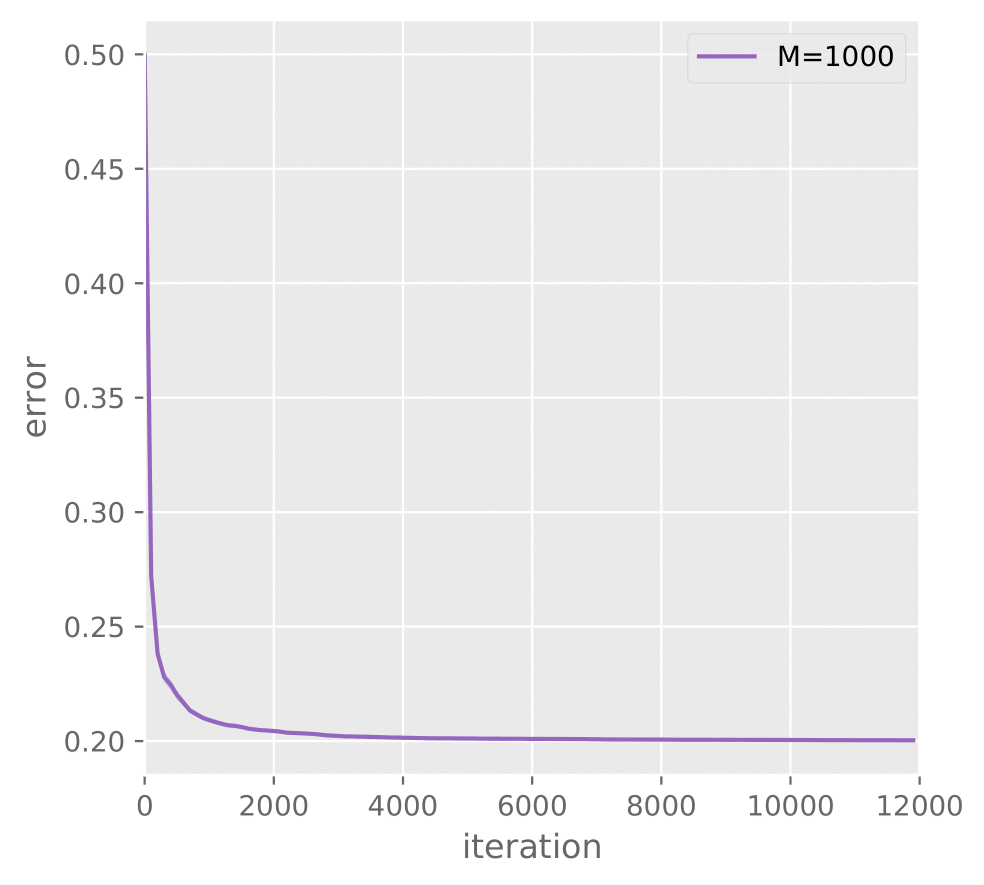}
    \subcaption{Classification errors}
  \end{minipage}
  \begin{minipage}[b]{0.48\linewidth}
    \centering
    \includegraphics[clip, width=\linewidth]{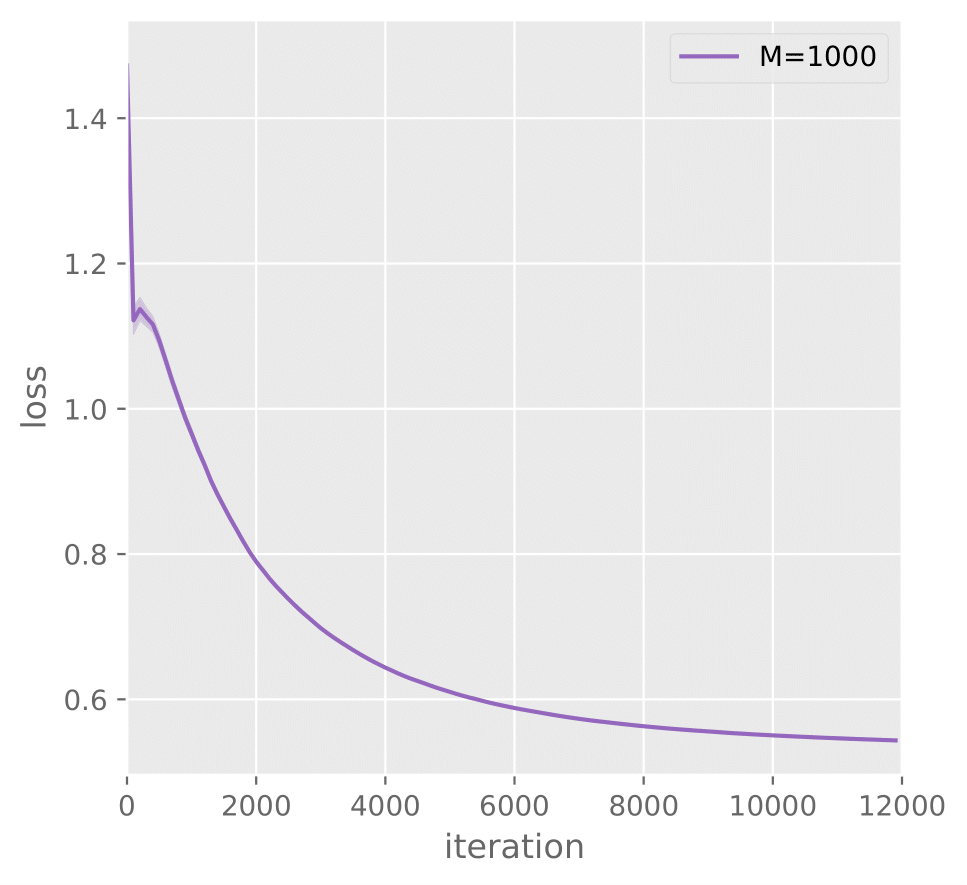}
    \subcaption{Loss functions}
\end{minipage}
\caption{Learning curves of the expected classification error (left) and the expected loss function (right) by averaged SGD with 1000 features. }
\label{fig2}
\end{figure}
First, the learning curves of the expected classification error and the expected loss function are drawn when the number of features $M = 1000$, as shown in Figure \ref{fig2}. Our theoretical result suggests that with sufficient features, the classification error converges exponentially fast, whereas the loss function converges sub-linearly. We can indeed observe a much faster decrease in the classification error (left) than in the loss function (right). Next, we show the learning curves of the expected classification error when the number of features are varied as $M = 100, 200, 500, 1000$ in Figure \ref{fig4}. We can see that the exact convergence of the classification error is not attained with relatively few features such as $M = 100$, which also coincides with our results. 
\begin{figure}
    \centering
    \includegraphics[clip, width=6cm]{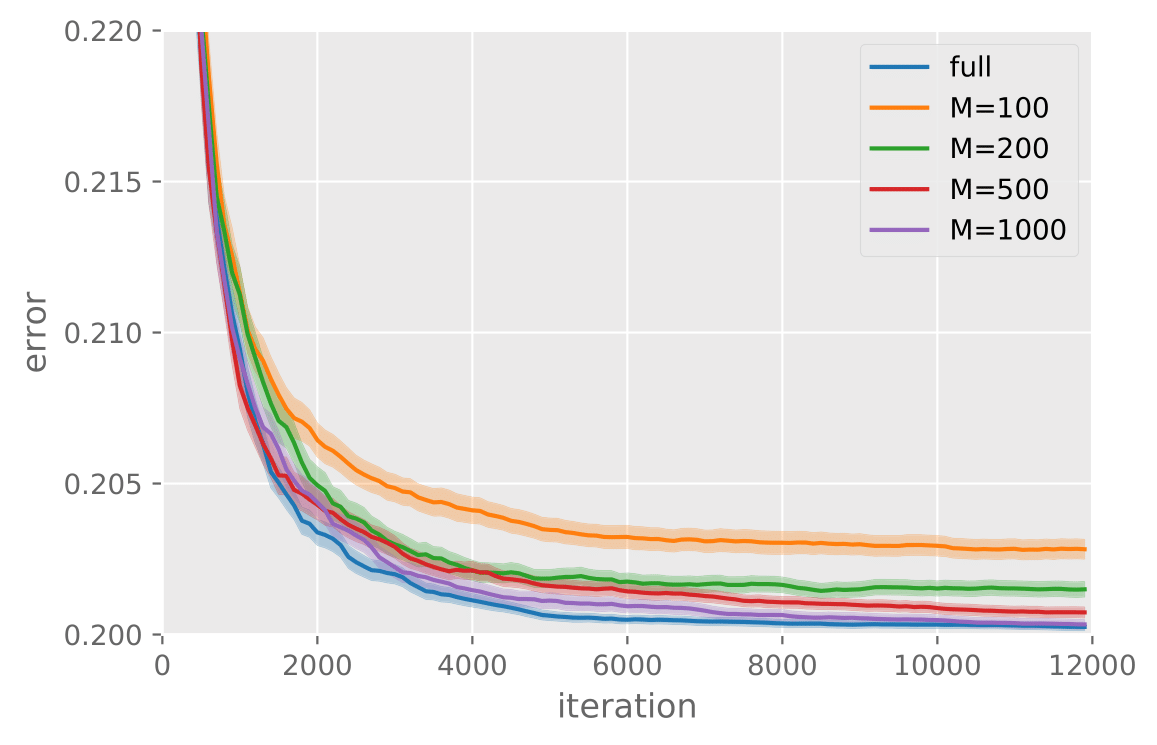}
    \caption{Comparison of learning curves of the expected classification error with varying numbers of features.}
    \label{fig4}
\end{figure}
Finally, the convergence of the classification error is compared in terms of computational cost between the random features model with $M = 500, 1000$ and the full kernel model without approximation. In Figure \ref{fig5}, the learning curves are drawn with respect to the number of parameter updates; the full kernel model requires increasing numbers of updates in later iterations, whereas the random features model requires a constant number of updates. It can be observed that both random features models require fewer parameter updates to achieve the same classification error than the full kernel model for a relatively small classification error. This implies that random features approximation is indeed computationally efficient under a strong low-noise condition.

\begin{figure}
    \centering
    \includegraphics[clip, width=6cm]{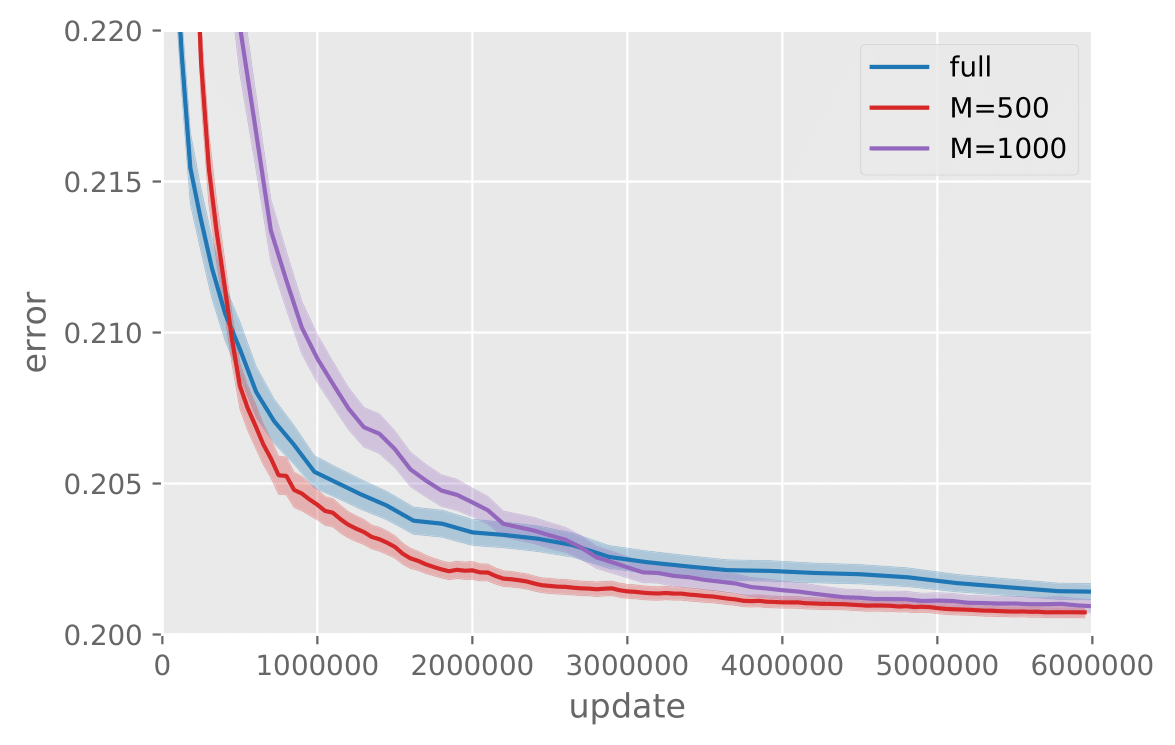}
    \caption{Comparison of learning curves with respect to number of parameter updates.}
    \label{fig5}
\end{figure}

\section{Conclusion}
This study shows that learning with SGD and random features could achieve exponential convergence of the classification error under a strong low-noise condition. Unlike when considering the convergence of a loss function, the resulting convergence rate of the classification error is independent of the number of features, indicating that an arbitrary small classification error is achievable as long as there is a sufficient number of random features. Our results suggest, for the first time, that random features is theoretically computationally efficient even for classification problems under certain settings. Our theoretical analysis has been verified by numerical experiments.
\\
One possible future direction is to extend our analysis to general low-noise conditions to derive faster rates than $O(1/\sqrt{n})$, as \citet{pillaud2018exponential} did in the case of the squared loss. It could also be interesting to explore the convergence speed of more sophisticated variants of SGD, such as stochastic accelerated methods and stochastic variance reduced methods \citep{schmidt2017minimizing, johnson2013accelerating, defazio2014saga, allen2017katyusha}.

\subsubsection*{Acknowledgements}
AN was partially supported by JSPS KAKENHI (19K20337) and JST PRESTO. TS was partially supported by JSPS KAKENHI (18K19793, 18H03201, and 20H00576), Japan Digital Design, and JST CREST.






\bibliography{ref}
\bibliographystyle{apalike}
\onecolumn
\renewcommand{\thesection}{\Alph{section}}
\setcounter{section}{0}
\setcounter{thm}{0}
\setcounter{cor}{0}

\part*{Appendix}

\section{Notation and Useful Propositions}
\label{notation}
Let $V$ be a Hilbert space with inner product $\langle \cdot, \cdot\rangle_V$ and the induced norm $\|\cdot\|_V$. For $A : V\rightarrow V$, we denote an operator norm of $A$ as $\|A\|_{\mathrm{op}}$, that is,
\begin{align}
    \|A\|_{\mathrm{op}} \defeq \sup_{v \in V} \frac{\|Av\|_{V}}{\|v\|_{V}}.
\end{align}
For $a, b \in V$, we define an outer product $a \otimes_V b:V\rightarrow V$ as follows:
\begin{align}
    (a \otimes_V b)v \defeq \langle b, v \rangle_V a, \ \ \forall v \in V.
\end{align}
Let $W$ be a closed subspace of $V$, then a projection onto $W$ is well defined and we denote its operator by $\mathcal{P}_{W}$. Then we have
\begin{align}
    v = \mathcal{P}_{W} v + \mathcal{P}_{W^\bot} v, \ \ \forall v \in V.
\end{align}
Furthermore, we define a partial order $\preceq$ between linear, positive semi-definite and self-adjoint operators $A, B: V \rightarrow V$ as follows:
\begin{align}
    A \preceq B \quad\overset{\mathrm{def}}{\Longleftrightarrow} \quad \langle Av, v\rangle_V \leq \langle Bv, v\rangle_V, \ \forall v \in V.
\end{align}

The following inequality shows that the difference between the square root of two self-adjoint positive semi-definite operators is bounded by the square root of the difference of them.
\begin{prop}
\label{b}
Let $V$ be a separable Hilbert space. For any compact, positive semi-definite, self-adjoint operators $S, \widetilde{S}:V \rightarrow V$, the following inequality holds:
\begin{equation}
    \| S^{\nicefrac{1}{2}} - \widetilde{S}^{\nicefrac{1}{2}} \|_{\mathrm{op}} \leq 
    \| S - \widetilde{S} \|_{\mathrm{op}}^{\nicefrac{1}{2}}
    \label{eqa4}
\end{equation}
\end{prop}
\begin{proof}
Since $S^{\nicefrac{1}{2}} - \widetilde{S}^{\nicefrac{1}{2}}$ is also a compact and self-adjoint operator, it allows eigendecomposition of itself. Then let $\lambda_{\max}$ be the eigenvalue with largest absolute value and $v$ be the corresponding normalized eigenfunction of 
$S^{\nicefrac{1}{2}} - \widetilde{S}^{\nicefrac{1}{2}}$, i.e.,
\begin{equation}
    (S^{\nicefrac{1}{2}} - \widetilde{S}^{\nicefrac{1}{2}}) v = \lambda_{\max} v.
\end{equation}
Since \eqref{eqa4} obviously holds if $S=\widetilde{S}$, we can assume that $\lambda_{\max} > 0$ without loss of generality. Because $S^{\nicefrac{1}{2}}$ is positive semi-definite, we have
\begin{align}
    \langle v, S v \rangle_{V} &= \|S^{\nicefrac{1}{2}} v\|_{V}^2 \\
    &= \|\widetilde{S}^{\nicefrac{1}{2}} v + \lambda_{\max} v\|_{V}^2 \\
    &= \langle v, \widetilde{S} v \rangle_{V}
    + \lambda_{\max}^2 + 2\lambda_{\max} \langle v, S^{\nicefrac{1}{2}} v \rangle_{V} \\
    &\geq \langle v, \widetilde{S} v \rangle_{V} + \lambda_{\max}^2.
\end{align}
Thus we have
\begin{align}
    \| S - \widetilde{S} \|_{\mathrm{op}} &\geq  \langle v, (S - \widetilde{S}) v \rangle_{V} \\
    &\geq \lambda_{\max}^2 = \| S^{\nicefrac{1}{2}} - \widetilde{S}^{\nicefrac{1}{2}} \|_{\mathrm{op}}^2,
\end{align}
which completes the proof.
\end{proof}
The following inequality is a generalization of the Bernstein inequality to random operators on separable Hilbert space and used in Lemma \ref{lem3} to derive the concentration of integral operators.
\begin{prop}[Proposition 3 in \cite{rudi2017generalization}]
\label{pro1}
Let $V$ be a separable Hilbert space and let $X_1, X_2, \ldots , X_n$ be a sequence of independent and identically distributed self-adjoint random operators on $V$. Assume that $\mathbb{E}X_i = 0$ and there exists $B > 0$ such that $\|X_i\|_{\mathrm{op}} \leq B$ almost surely for any $1\leq i\leq n$. Let S be the positive operator such that $\mathbb{E}X_i^2 \leq S$. Then for any $\delta \in (0,1]$, the following inequality holds with probability at least $1-\delta$:
\begin{equation}
    \left\|\frac{1}{n} \sum_{i=1}^n X_i \right\|_{\mathrm{op}} \leq \frac{2B \beta}{3n} + \sqrt{\frac{2 \|S\|_{\mathrm{op}} \beta}{n}},
\end{equation}
where $\beta = \log \frac{2 \mathrm{tr} S}{\|S\|_{\mathrm{op}} \delta}.$
\end{prop}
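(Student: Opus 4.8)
The plan is to treat this as the Hilbert-space form of the intrinsic-dimension matrix Bernstein inequality, proved by the matrix Laplace-transform method together with Lieb's concavity theorem, with a compression-to-finite-dimensions step to handle the infinite-dimensional ambient space. Write $Z = \frac1n\sum_{i=1}^n X_i$. Since $\|Z\|_{\mathrm{op}} = \max\{\lambda_{\max}(Z), \lambda_{\max}(-Z)\}$ and the family $\{-X_i\}$ satisfies exactly the same hypotheses (same $B$, same $S$), it suffices to bound $\mathbb{P}(\lambda_{\max}(Z) \geq t)$ and then apply a union bound, with the resulting factor $2$ absorbed into the logarithm defining $\beta$. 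Note that the hypothesis $\mathbb{E}X_i^2 \preceq S$ with $S$ trace class forces $\mathbb{E}\,\mathrm{tr}(X_i^2) \leq \mathrm{tr}(S) < \infty$, so each $X_i$, and hence $Z$, is almost surely Hilbert--Schmidt; this compactness is what makes the limiting argument below legitimate.

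First I would reduce to finite rank. Fix an increasing sequence of finite-rank orthogonal projections $P_m \uparrow I$ and set $X_i^{(m)} = P_m X_i P_m$, viewed as self-adjoint operators on $\mathrm{ran}(P_m)$. Because $Z$ is a.s. compact, $\lambda_{\max}(P_m Z P_m) \uparrow \lambda_{\max}(Z)$, so by continuity of probability it is enough to prove, \emph{uniformly in $m$}, the desired tail bound for $\frac1n\sum_i X_i^{(m)}$. On $\mathrm{ran}(P_m)$ the operators $X_i^{(m)}$ are centered, self-adjoint, satisfy $\|X_i^{(m)}\|_{\mathrm{op}} \leq B$, and have variance proxy $\mathbb{E}(X_i^{(m)})^2 = P_m(\mathbb{E}X_i^2)P_m \preceq P_m S P_m$ with $\|P_m S P_m\|_{\mathrm{op}} \leq \|S\|_{\mathrm{op}}$ and $\mathrm{tr}(P_m S P_m) \leq \mathrm{tr}(S)$, so every quantity entering the final bound is controlled independently of $m$.

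On the finite-dimensional space I would run the standard chain: a Chernoff bound $\mathbb{P}(\lambda_{\max}(n Z_m) \geq u) \leq e^{-\theta u}\,\mathbb{E}\,\mathrm{tr}\exp(\theta\sum_i X_i^{(m)})$ for $\theta>0$; iterating Lieb's concavity theorem through Jensen to peel the expectations, $\mathbb{E}\,\mathrm{tr}\exp(\theta\sum_i X_i^{(m)}) \leq \mathrm{tr}\exp(\sum_i \log\mathbb{E}e^{\theta X_i^{(m)}})$; the single-operator moment-generating-function bound, obtained from the scalar inequality $e^{\theta x} \leq 1 + \theta x + \tfrac{e^{\theta B}-\theta B-1}{B^2}x^2$ for $|x|\leq B$ used in the Loewner order, which with $\mathbb{E}X_i^{(m)} = 0$ gives $\mathbb{E}e^{\theta X_i^{(m)}} \preceq I + g(\theta)\,P_m S P_m$ and hence, by operator monotonicity of $\log$ and $\log(1+a)\leq a$, $\log\mathbb{E}e^{\theta X_i^{(m)}} \preceq g(\theta)\,P_m S P_m$ with $g(\theta) = (e^{\theta B}-\theta B - 1)/B^2$; summing and using monotonicity of $\mathrm{tr}\exp$, $\mathbb{E}\,\mathrm{tr}\exp(\theta\sum_i X_i^{(m)}) \leq \mathrm{tr}\exp(n g(\theta)\,P_m S P_m)$; and finally the intrinsic-dimension refinement of the Laplace-transform method, which replaces $\mathrm{tr}\exp$ by $\mathrm{tr}\,\psi_\theta$ with $\psi_\theta(x) = e^{\theta x}-\theta x - 1$ (a quantity that stays finite because $Z$ is Hilbert--Schmidt) and extracts the effective dimension $\mathrm{tr}(S)/\|S\|_{\mathrm{op}}$ as a prefactor. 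Relaxing $g(\theta) \leq \tfrac{\theta^2/2}{1-\theta B/3}$, optimizing over $\theta$, and choosing $t = \frac{2B\beta}{3n} + \sqrt{\tfrac{2\|S\|_{\mathrm{op}}\beta}{n}}$ — which by expanding the square makes $nt^2 \geq (2\|S\|_{\mathrm{op}} + \tfrac23 Bt)\beta$ — pushes the tail below $\tfrac{\mathrm{tr}(S)}{\|S\|_{\mathrm{op}}}e^{-\beta} = \delta/2$ with $\beta = \log\tfrac{2\,\mathrm{tr}(S)}{\|S\|_{\mathrm{op}}\delta}$. Sending $m\to\infty$ and union-bounding over $\pm Z$ finishes the argument.

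The main obstacle is purely the infinite-dimensional bookkeeping: $\mathrm{tr}\exp(\theta Z)$ is $+\infty$ for a bounded operator on an infinite-dimensional $V$, so the Laplace-transform method cannot be run on $V$ directly. One must either compress to finite rank and pass to the limit — which requires checking compactness of $Z$, monotone convergence of $\lambda_{\max}(P_m Z P_m)$, and $m$-uniformity of the effective-dimension and operator-norm constants, all as above — or work throughout with $\mathrm{tr}\,\psi_\theta$ in place of $\mathrm{tr}\exp$. Everything else — Lieb concavity, the scalar mgf inequality, the Bennett-to-Bernstein relaxation, and the $\theta$-optimization producing the constants $\tfrac23$ and $\sqrt 2$ — is standard; pinning down the exact stated form of $\beta$ only requires invoking the elementary inequalities in the right direction.
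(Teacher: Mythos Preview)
Your proposal is substantially more than what the paper does: the paper's entire ``proof'' is the single line ``Restatement of Proposition 3 in \cite{rudi2017generalization}.'' So there is nothing to compare at the level of argument --- the paper simply imports the result from Rudi and Rosasco, whereas you have sketched the actual proof via the intrinsic-dimension matrix Bernstein machinery (Laplace transform, Lieb concavity, Bennett-to-Bernstein relaxation, finite-rank compression). That sketch is correct in outline and is indeed the standard route to this inequality in separable Hilbert spaces.

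Two small technical corrections that do not affect the conclusion. First, $\mathbb{E}(X_i^{(m)})^2 = \mathbb{E}(P_m X_i P_m X_i P_m)$, which is not equal to $P_m(\mathbb{E}X_i^2)P_m$ but is dominated by it in the Loewner order (since $P_m X_i (I-P_m) X_i P_m \succeq 0$); that inequality is all you need. Second, $\lambda_{\max}(P_m Z P_m)$ need not be monotone in $m$, but for compact $Z$ one does have $P_m Z P_m \to Z$ in operator norm, hence convergence of the top eigenvalue; convergence (not monotonicity) is sufficient to pass the tail bound to the limit.
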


\section{Basic Properties of RKHS}
In analyses of kernel methods, it is common to assume $\mathcal{X}$ is compact, $\rho_{\mathcal{X}}$ has the full support and $k$ is continuous because under such assumptions we utilize Mercer's theorem to characterize RKHS \cite{cucker2002mathematical, aronszajn1950theory}. However, such an assumption may not be adopted under the strong low noise condition in which $\rho_{\mathcal{X}}$ may not have full support. In this section, we explain some basic properties of reproducing kernel Hilbert space (RKHS) under more general settings based on \cite{dieuleveut2016nonparametric, steinwart2012mercer}.\\
First, for given kernel function $k$ and its RKHS $\Sp$, we define a covariance operator $\Sigma:\mathcal{H} \rightarrow \mathcal{H}$ as follows:
\begin{align}
    \langle f, \Sigma g \rangle_{\mathcal{H}} = \langle f,  g \rangle_{\Ltwo{\rho_\mathcal{X}}}, \ \ \forall f, g \in \Sp.
\end{align}
It is well-defined through Riesz' representation theorem. Using reproducing property, we have
\begin{align}
    &\Sigma = \mathbb{E}_{X \sim \rho_{\mathcal{X}}}[k(\cdot, X) \otimes_{\Sp} k(\cdot, X)], \\
    &(\Sigma f)(z) = \mathbb{E}_{X \sim \rho_{\mathcal{X}}}[f(X)k(X, z)],\ \ \forall f \in \Sp. \label{intop1}
\end{align}
where expectation is defined via a Bochner integration. From the representation \eqref{intop1}, we can extend the covariance operate to $f \in \Ltwo{\rho_{\mathcal{X}}}$. We denote this by $T:\Ltwo{\rho_{\mathcal{X}}} \rightarrow \Ltwo{\rho_{\mathcal{X}}}$ as follows:
\begin{align}
    (T f)(z) =  \mathbb{E}_{X \sim \rho_{\mathcal{X}}}[f(X)k(X, z)],\ \ \forall f \in \Ltwo{\rho_{\mathcal{X}}}. \label{intop2}
\end{align}
$\mathrm{Im}(T) \subset \Ltwo{\rho_{\mathcal{X}}}$ is verified since $k(\cdot, x)$ is uniformly bounded by Assumption \ref{as2-3}.
Also, we can write $T$ using feature expansion \eqref{eq1} as
\begin{align}
    T = \mathbb{E}_{\omega \sim \tau}[\varphi(\cdot, \omega) \otimes_{\Ltwo{\rho_\mathcal{X}}} \varphi(\cdot, \omega)], \label{intop4}
\end{align}
since \begin{align}
    (T f)(z) &= \mathbb{E}_{X \sim \rho_{\mathcal{X}}}[f(X) \mathbb{E}_{\omega \sim \tau}[\varphi(X, \omega) \varphi(z, \omega)]] \\
    &= \mathbb{E}_{\omega \sim \tau} [\langle f, \varphi(\cdot, \omega) \rangle_{\Ltwo{\rho_\mathcal{X}}}\varphi(z, \omega)].
\end{align}
Following \cite{dieuleveut2016nonparametric}, here we denote a set of square integral function itself by $\mathcal{L}^2(d\rho_{\mathcal{X}})$, that is, its quotient is $\Ltwo{\rho(\mathcal{X})}$, which is separable Hilbert space. We can also define the extended covariance operator  $\mathcal{T}:\Ltwo{\rho_{\mathcal{X}}} \rightarrow \mathcal{L}^2(d\rho_{\mathcal{X}})$ as follows:
 \begin{align}
    (\mathcal{T} f)(z) =  \mathbb{E}_{X \sim \rho_{\mathcal{X}}}[f(X)k(X, z)],\ \ \forall f \in \Ltwo{\rho_{\mathcal{X}}}. \label{intop3}
\end{align}
Here we present some properties of these covariance operators $\Sigma, T, \mathcal{T}$ from \cite{dieuleveut2016nonparametric}.

\begin{prop} \mbox{} \label{prop1}
\begin{enumerate} 
    \item $\Sigma$ is self-adjoint, continuous operator and $\mathrm{Ker}(\Sigma) = \{ f \in \Sp \ |\  \|f\|_{\Ltwo{\rho_{\mathcal{X}}}} = 0 \}.$
    \item $T$ is continuous, self-adjoint, positive semi-definite operator.
    \item $\mathcal{T}^{\nicefrac{1}{2}}: \mathrm{Ker}(T)^\bot \rightarrow \mathrm{Ker}(\Sigma)^\bot$ is well-defined and an isometry. 
    In particular, for any $f \in \mathrm{Ker}(\Sigma)^\bot \subset \Sp$, there exists $g \in \mathrm{Ker}(T)^\bot \subset \Ltwo{\rho_{\mathcal{X}}}$ such that $\|f\|_\Sp = \|g \|_{\Ltwo{\rho_{\mathcal{X}}}}.$
\end{enumerate}
\end{prop}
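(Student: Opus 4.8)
The plan is to treat the three assertions separately; items~1 and~2 are short direct checks, and item~3 is the substantive one, essentially that of \cite{dieuleveut2016nonparametric}. For item~1, self-adjointness of $\Sigma$ is read off its defining identity, since $\langle f,\Sigma g\rangle_\Sp=\langle f,g\rangle_{\Ltwo{\rho_\mathcal{X}}}$ is symmetric in $f,g$; boundedness follows from $\|h\|_{\Ltwo{\rho_\mathcal{X}}}\le\|h\|_{L^\infty(\mathcal{X})}\le R\|h\|_\Sp$ (valid because $\rho_\mathcal{X}$ is a probability measure and by Assumption~\ref{as3}), which applied twice gives $\|\Sigma g\|_\Sp^2=\langle\Sigma g,g\rangle_{\Ltwo{\rho_\mathcal{X}}}\le R^2\|\Sigma g\|_\Sp\|g\|_\Sp$, i.e.\ $\|\Sigma\|_{\mathrm{op}}\le R^2$; and $\Sigma g=0$ is equivalent to $\langle f,g\rangle_{\Ltwo{\rho_\mathcal{X}}}=0$ for all $f\in\Sp$, which by testing $f=g$ forces $\|g\|_{\Ltwo{\rho_\mathcal{X}}}=0$, the converse being Cauchy--Schwarz. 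For item~2, I would invoke the representation \eqref{intop4}, $T=\mathbb{E}_{\omega\sim\tau}[\varphi(\cdot,\omega)\otimes_{\Ltwo{\rho_\mathcal{X}}}\varphi(\cdot,\omega)]$, exhibiting $T$ as a Bochner integral of rank-one operators each self-adjoint, positive semi-definite and of operator norm $\|\varphi(\cdot,\omega)\|_{\Ltwo{\rho_\mathcal{X}}}^2\le R^2$ by Assumption~\ref{as3}; these properties pass to the integral, giving in particular $\|T\|_{\mathrm{op}}\le R^2$.

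For item~3, the key is to present both operators through the canonical inclusion $S:\Sp\to\Ltwo{\rho_\mathcal{X}}$ (bounded with $\|S\|_{\mathrm{op}}\le R$ by item~1): a computation with the reproducing property gives $S^{*}h=\mathbb{E}_X[h(X)k(\cdot,X)]$, whence $\Sigma=S^{*}S$ and $T=SS^{*}$, while $\mathcal{T}$ is $S^{*}$ with codomain the genuine function space $\mathcal{L}^2(d\rho_\mathcal{X})\supset\Sp$. The polar decomposition $S=U\Sigma^{1/2}$ then supplies a partial isometry $U$ with initial space $\overline{\mathrm{Im}\,\Sigma^{1/2}}=\mathrm{Ker}(\Sigma)^\bot$ and final space $\overline{\mathrm{Im}\,S}=\mathrm{Ker}(S^{*})^\bot=\mathrm{Ker}(T)^\bot$, so that $U^{*}$ restricts to an isometric isomorphism $\mathrm{Ker}(T)^\bot\to\mathrm{Ker}(\Sigma)^\bot$ with values in $\Sp$; identifying this map with $\mathcal{T}^{1/2}$ in the sense of \cite{dieuleveut2016nonparametric} gives the stated property. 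The last assertion of item~3 is then immediate: for $f\in\mathrm{Ker}(\Sigma)^\bot$ put $g=Uf\in\mathrm{Ker}(T)^\bot$, so that $\|g\|_{\Ltwo{\rho_\mathcal{X}}}=\|f\|_\Sp$.

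The hard part will be the functional-analytic bookkeeping around item~3: one must keep the quotient space $\Ltwo{\rho_\mathcal{X}}$ (on which $T$ acts) carefully distinct from the space $\mathcal{L}^2(d\rho_\mathcal{X})$ of genuine functions (into which $\mathcal{T}^{1/2}$ must map, since elements of $\Sp$ are genuine functions), and verify both that $\mathrm{Im}(\mathcal{T}^{1/2})$ matches $\mathrm{Ker}(\Sigma)^\bot$ as a set of functions and that the square-root isometry genuinely respects the RKHS norm. Under Mercer-type assumptions this would be routine, but the strong low-noise condition forces us to allow $\rho_\mathcal{X}$ without full support, so rather than invoking Mercer's theorem I would import the explicit construction and isometry from \cite{dieuleveut2016nonparametric, steinwart2012mercer}.
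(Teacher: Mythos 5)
Your argument is correct, but note that the paper itself offers no proof of this proposition: it is stated as imported from \cite{dieuleveut2016nonparametric} (see also \cite{steinwart2012mercer}), so your sketch is in effect a self-contained reconstruction of the cited argument rather than a departure from anything in the paper. Items 1 and 2 are exactly the short checks you give: self-adjointness from the symmetry of the defining identity, $\|\Sigma\|_{\mathrm{op}}\leq R^{2}$ and $\|T\|_{\mathrm{op}}\leq R^{2}$ from $\|\cdot\|_{\Ltwo{\rho_\mathcal{X}}}\leq\|\cdot\|_{L^\infty}\leq R\|\cdot\|_{\Sp}$ under Assumption \ref{as3}, and the kernel identification by testing $f=g$. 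For item 3, the factorization $\Sigma=S^{*}S$, $T=SS^{*}$ through the inclusion $S:\Sp\to\Ltwo{\rho_\mathcal{X}}$ together with the polar decomposition $S=U\Sigma^{\nicefrac{1}{2}}$ is the standard mechanism behind the cited result, and your bookkeeping of initial space $\mathrm{Ker}(\Sigma)^{\perp}$ and final space $\overline{\mathrm{Im}\,S}=\mathrm{Ker}(S^{*})^{\perp}=\mathrm{Ker}(T)^{\perp}$ is right. The only place where you defer rather than prove is the identification of $U^{*}|_{\mathrm{Ker}(T)^{\perp}}$ with $\mathcal{T}^{\nicefrac{1}{2}}$; since the paper defines $\mathcal{T}$ but never $\mathcal{T}^{\nicefrac{1}{2}}$ separately, you can close this yourself: uniqueness of the positive square root gives $T^{\nicefrac{1}{2}}=U\Sigma^{\nicefrac{1}{2}}U^{*}$, hence $S(U^{*}g)=T^{\nicefrac{1}{2}}g$ for $g\in\mathrm{Ker}(T)^{\perp}$, so the genuine function $U^{*}g\in\Sp$ represents the class $T^{\nicefrac{1}{2}}g$, and by item 1 it is the unique such representative lying in $\mathrm{Ker}(\Sigma)^{\perp}$ (two $\Sp$-representatives of the same class differ by an element of $\mathrm{Ker}(\Sigma)$); this is precisely the sense in which \cite{dieuleveut2016nonparametric} constructs the map. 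One last small point: with complex-valued features such as random Fourier features, the rank-one integrands in the representation \eqref{intop4} involve conjugation, so either work in the complexified $\Ltwo{\rho_\mathcal{X}}$ or pass to the real (cosine) feature map before asserting self-adjointness of $T$ on the real space.
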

We denote the extended covariate operator associate with $k_M$ by $T_M:\Ltwo{\rho_{\mathcal{X}}} \rightarrow \Ltwo{\rho_{\mathcal{X}}}$ and $\mathcal{T}_M:\Ltwo{\rho_{\mathcal{X}}} \rightarrow \mathcal{L}^2(d\rho_{\mathcal{X}})$. 

As with \eqref{intop4}, we have
\begin{align}
     &T_M = \frac{1}{M} \sum_{i=1}^M \varphi(\cdot, \omega_i) \otimes_{\Ltwo{\rho_\mathcal{X}}} \varphi(\cdot, \omega_i), \label{intop6} \\
     &\mathbb{E}[T_M] = T.
\end{align}
The next lemma provides a probabilistic bounds about the difference of the two covariate operators $T$ and $T_M$.

\begin{lem}
\label{lem3}
For any $\delta \in [0, 1),$ the following inequality holds with probability at least $1-\delta$:
\begin{equation}
    \left \| T - T_M \right \|_{\mathrm{op}} \leq R^2\left(\frac{2\beta}{3M} + \sqrt{\frac{2\beta}{M}}\right)
\end{equation}
where $\beta = \log \frac{2 R^2}{\| T \|_{\mathrm{op}} \delta}.$
\end{lem}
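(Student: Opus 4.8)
The plan is to apply Proposition~\ref{pro1}, the operator Bernstein inequality, on the separable Hilbert space $V = \Ltwo{\rho_\mathcal{X}}$ to the centered rank-one operators
\begin{equation}
    X_i = \varphi(\cdot, \omega_i) \otimes_{\Ltwo{\rho_\mathcal{X}}} \varphi(\cdot, \omega_i) - T, \qquad i = 1, \ldots, M.
\end{equation}
Writing $Y_i = \varphi(\cdot, \omega_i) \otimes_{\Ltwo{\rho_\mathcal{X}}} \varphi(\cdot, \omega_i)$, Assumption~\ref{as3} gives $\varphi(\cdot, \omega) \in \Ltwo{\rho_\mathcal{X}}$ with $\|\varphi(\cdot, \omega)\|_{\Ltwo{\rho_\mathcal{X}}} \leq R$, so each $Y_i$ is self-adjoint, positive semi-definite, and compact, with $\mathbb{E}[Y_i] = T$ by \eqref{intop4}. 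Hence $X_1, \dots, X_M$ are i.i.d.\ self-adjoint operators with $\mathbb{E}[X_i] = 0$, and it remains to produce the two constants required by Proposition~\ref{pro1}.

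First I would bound $\|X_i\|_{\mathrm{op}}$. Since $\langle v, Y_i v\rangle_{\Ltwo{\rho_\mathcal{X}}} = |\langle \varphi(\cdot, \omega_i), v\rangle_{\Ltwo{\rho_\mathcal{X}}}|^2 \in [0, R^2 \|v\|_{\Ltwo{\rho_\mathcal{X}}}^2]$, we have $0 \preceq Y_i \preceq R^2\,\mathrm{Id}$, and averaging also gives $0 \preceq T \preceq R^2\,\mathrm{Id}$; therefore the self-adjoint operator $X_i = Y_i - T$ has spectrum in $[-R^2, R^2]$, so $\|X_i\|_{\mathrm{op}} \leq R^2 =: B$, which is the clean bound rather than the crude $2R^2$. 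For the second moment, the identity $(a \otimes_V a)^2 = \|a\|_V^2 (a \otimes_V a)$ yields $Y_i^2 = \|\varphi(\cdot, \omega_i)\|_{\Ltwo{\rho_\mathcal{X}}}^2 Y_i \preceq R^2 Y_i$, and since $\mathbb{E}[X_i^2] = \mathbb{E}[Y_i^2] - T^2 \preceq \mathbb{E}[Y_i^2]$, taking expectations gives
\begin{equation}
    \mathbb{E}[X_i^2] \preceq R^2\, \mathbb{E}[Y_i] = R^2 T =: S .
\end{equation}

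Then $\|S\|_{\mathrm{op}} = R^2 \|T\|_{\mathrm{op}}$ and $\mathrm{tr}\, S = R^2\, \mathrm{tr}\, T$, while $\mathrm{tr}\, T = \mathbb{E}_{\omega \sim \tau}[\|\varphi(\cdot, \omega)\|_{\Ltwo{\rho_\mathcal{X}}}^2] \leq R^2$; hence the Bernstein parameter obeys $\log \frac{2\, \mathrm{tr}\, S}{\|S\|_{\mathrm{op}} \delta} = \log \frac{2\, \mathrm{tr}\, T}{\|T\|_{\mathrm{op}} \delta} \leq \log \frac{2 R^2}{\|T\|_{\mathrm{op}} \delta} = \beta$. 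Because $\frac{1}{M}\sum_{i=1}^M X_i = T_M - T$ by \eqref{intop6} and the right-hand side of the Bernstein bound is increasing in its parameter, Proposition~\ref{pro1} will give, with probability at least $1-\delta$,
\begin{equation}
    \|T - T_M\|_{\mathrm{op}} \leq \frac{2 R^2 \beta}{3M} + \sqrt{\frac{2 R^4 \beta}{M}} = R^2 \left( \frac{2\beta}{3M} + \sqrt{\frac{2\beta}{M}} \right),
\end{equation}
which is the claim (the case $\delta = 0$ being vacuous). I do not expect a genuine obstacle here: the only points needing care are the spectral bound $\|X_i\|_{\mathrm{op}} \leq R^2$ via the difference-of-positive-semi-definite-operators argument and the operator inequality $\mathbb{E}[X_i^2] \preceq R^2 T$, which together are precisely what produce the constants $B = R^2$ and $\|S\|_{\mathrm{op}} = R^4$ appearing in the statement.
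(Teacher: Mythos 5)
Your proposal is correct and follows essentially the same route as the paper: center the rank-one operators $\varphi(\cdot,\omega_i)\otimes_{\Ltwo{\rho_\mathcal{X}}}\varphi(\cdot,\omega_i)$, establish $\|X_i\|_{\mathrm{op}}\leq R^2$, $\mathbb{E}[X_i^2]\preceq R^2 T$, and $\mathrm{tr}\,T\leq R^2$, then invoke Proposition \ref{pro1} with $B=R^2$, $S=R^2T$ and absorb $\|T\|_{\mathrm{op}}\leq R^2$ into the stated constant. The only difference from the paper is the sign convention for $X_i$, which is immaterial.
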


\begin{proof}
Let $X_i = T - \varphi(\cdot, \omega_i) \otimes_{\Ltwo{\rho_\mathcal{X}}} \varphi(\cdot, \omega_i).$ Then $T - T_M = \frac{1}{M} \sum_{i=1}^M X_i.$ Also, we have
\begin{align}
    \mathbb{E} X_i &= 0, \\
    X_i &\preceq T \preceq R^2 I, \\
    X_i &\succeq - \varphi(\cdot, \omega_i) \otimes_{\Ltwo{\rho_\mathcal{X}}} \varphi(\cdot, \omega_i) \succeq -R^2 I, \\
    ||X_i||_{\mathrm{op}} &\leq R^2, \text{as a result of two previous inequalities,} \\
    \mathbb{E}X_i^2 &= \mathbb{E}\left[ \varphi(\cdot, \omega_i) \otimes_{\Ltwo{\rho_\mathcal{X}}} \varphi(\cdot, \omega_i) \right]^2 - T ^ 2\\
    &\preceq \mathbb{E}\left[ \varphi(\cdot, \omega_i) \otimes_{\Ltwo{\rho_\mathcal{X}}} \varphi(\cdot, \omega_i) \right]^2 \\
    &\preceq \mathbb{E} \left[ \langle \varphi(\cdot, \omega_i), \varphi(\cdot, \omega_i) \rangle_{\Ltwo{\rho_\mathcal{X}}} \varphi(\cdot, \omega_i) \otimes_{\Ltwo{\rho_\mathcal{X}}} \varphi(\cdot, \omega_i) \right] \\
    &\preceq R^2 T, \\
    \mathrm{tr}T &= \int_\mathcal{X} k(x, x) \mathrm{d}\rho_\mathcal{X}(x) \leq R^2.
\end{align}
Let $B= R^2$ and $S = R^2 T$ in Proposition \ref{pro1}, we have
\begin{align}
    \left \| T - T_M \right \|_{\mathrm{op}} &= \left \| \frac{1}{M} \sum_{i=1}^M X_i \right \|_{\mathrm{op}} \\
    &\leq \frac{2R^2 \beta}{3M} + \sqrt{\frac{2R^2 \| T \|_{\mathrm{op}} \beta}{M}} \\
    &\leq  R^2\left(\frac{2\beta}{3M} + \sqrt{\frac{2\beta}{M}}\right),
\end{align}
which completes the proof.
\end{proof}
Let $\Sp$ and $\Rfsp$ be RKHSs associate with kernels $k$ and $k_M$, respectively.
Using Proposition \ref{prop1} and Lemma \ref{lem3}, we have the following proposition, which is essential in the proof of Theorem \ref{thm1}.
\begin{lem}
\label{lem1}
For any $\delta \in (0, 1]$ and $\xi > 0$, if
$$M \geq \max \left\{ \frac{8}{3} \left( \frac{R}{\xi} \right)^2, 32 \left( \frac{R}{\xi} \right)^4 \right\} \log \frac{2R^2}{\| T \|_{\mathrm{op}} \delta}$$
holds, the following statement holds with probability at least $1-\delta$:\\
For any $g \in \Sp,$ there exists $\widetilde{g} \in \Rfsp$ that satisfies
\begin{itemize}
    \item $\| g - \widetilde{g} \|_{\Ltwo{\rho_\mathcal{X}}} \leq \xi \|g\|_{\Sp}$
    \item $\| g \|_{\Sp} \geq \| \widetilde{g} \|_{\Rfsp}$.
\end{itemize}
Also, for any $\widetilde{g} \in \Rfsp,$ there exists $g \in \Sp$ that satisfies
\begin{itemize}
    \item $\| g - \widetilde{g} \|_{\Ltwo{\rho_\mathcal{X}}} \leq \xi \|\widetilde{g}\|_{\Rfsp}$
    \item $\| g \|_{\Sp} \leq \| \widetilde{g} \|_{\Rfsp}$.
\end{itemize}
\end{lem}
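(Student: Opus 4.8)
The plan is to exploit the isometric ``square-root'' descriptions of the two RKHS norms supplied by Proposition~\ref{prop1}: for $f$ in $\mathrm{Ker}(\Sigma)^\bot\subset\Sp$ one can write $f=\mathcal{T}^{1/2}u$ with $u\in\mathrm{Ker}(T)^\bot\subset\Ltwo{\rho_\mathcal{X}}$ and $\|f\|_{\Sp}=\|u\|_{\Ltwo{\rho_\mathcal{X}}}$, and, applying the same proposition to the kernel $k_M$ and the associated operators $T_M,\mathcal{T}_M$ (and $\Sigma_M$, the covariance operator of $k_M$, defined exactly as $\Sigma$), the analogous statement for $h\in\mathrm{Ker}(\Sigma_M)^\bot\subset\Rfsp$. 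The construction transfers a function between the two spaces by \emph{keeping the same $L^2$-coordinate and swapping the square root}: given $g\in\Sp$ we set $\widetilde g:=\mathcal{T}_M^{1/2}u$, and given $\widetilde g\in\Rfsp$ we set $g:=\mathcal{T}^{1/2}v$. The resulting $L^2$-error is then controlled purely by $\|\mathcal{T}^{1/2}-\mathcal{T}_M^{1/2}\|_{\mathrm{op}}$, which Proposition~\ref{b} bounds by $\|T-T_M\|_{\mathrm{op}}^{1/2}$, and Lemma~\ref{lem3} makes this quantity as small as we like once $M$ is large.

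In detail, for the first claim I would first discard the $L^2$-null part: decomposing $g=\mathcal{P}_{\mathrm{Ker}(\Sigma)}g+\mathcal{P}_{\mathrm{Ker}(\Sigma)^\bot}g$, the first summand vanishes in $\Ltwo{\rho_\mathcal{X}}$ while $\|\mathcal{P}_{\mathrm{Ker}(\Sigma)^\bot}g\|_{\Sp}\le\|g\|_{\Sp}$, so it suffices to treat $g\in\mathrm{Ker}(\Sigma)^\bot$. Write $g=\mathcal{T}^{1/2}u$ with $u\in\mathrm{Ker}(T)^\bot$ and $\|u\|_{\Ltwo{\rho_\mathcal{X}}}=\|g\|_{\Sp}$, and define $\widetilde g:=\mathcal{T}_M^{1/2}u=\mathcal{T}_M^{1/2}\mathcal{P}_{\mathrm{Ker}(T_M)^\bot}u$ (the square root annihilates $\mathrm{Ker}(T_M)$). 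By Proposition~\ref{prop1}(3) applied to $k_M$, $\widetilde g\in\mathrm{Ker}(\Sigma_M)^\bot\subset\Rfsp$ with $\|\widetilde g\|_{\Rfsp}=\|\mathcal{P}_{\mathrm{Ker}(T_M)^\bot}u\|_{\Ltwo{\rho_\mathcal{X}}}\le\|u\|_{\Ltwo{\rho_\mathcal{X}}}=\|g\|_{\Sp}$, which is the second bullet. For the first bullet, identifying $\mathcal{T}^{1/2}u,\mathcal{T}_M^{1/2}u$ with $T^{1/2}u,T_M^{1/2}u\in\Ltwo{\rho_\mathcal{X}}$,
\[
\|g-\widetilde g\|_{\Ltwo{\rho_\mathcal{X}}}
=\|(T^{1/2}-T_M^{1/2})u\|_{\Ltwo{\rho_\mathcal{X}}}
\le\|T^{1/2}-T_M^{1/2}\|_{\mathrm{op}}\,\|u\|_{\Ltwo{\rho_\mathcal{X}}}
\le\|T-T_M\|_{\mathrm{op}}^{1/2}\,\|g\|_{\Sp},
\]
by Proposition~\ref{b}. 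The reverse statement is entirely symmetric: exchange the roles of $(T,\Sigma,\Sp)$ and $(T_M,\Sigma_M,\Rfsp)$, discard the $\mathrm{Ker}(\Sigma_M)$-component of $\widetilde g$, write $\widetilde g=\mathcal{T}_M^{1/2}v$ with $\|v\|_{\Ltwo{\rho_\mathcal{X}}}=\|\widetilde g\|_{\Rfsp}$, and set $g:=\mathcal{T}^{1/2}v\in\Sp$.

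Finally, on the event of Lemma~\ref{lem3}, which has probability at least $1-\delta$, we have $\|T-T_M\|_{\mathrm{op}}\le R^2\big(\tfrac{2\beta}{3M}+\sqrt{\tfrac{2\beta}{M}}\big)$ with $\beta=\log\tfrac{2R^2}{\|T\|_{\mathrm{op}}\delta}$, so it remains to choose $M$ so that the right-hand side is at most $\xi^2$; forcing each of the two terms to be at most $\tfrac{\xi^2}{2}$ yields exactly a bound of the form $M\ge\max\{c_1(R/\xi)^2,\,c_2(R/\xi)^4\}\log\tfrac{2R^2}{\|T\|_{\mathrm{op}}\delta}$, matching the stated hypothesis (with room to spare in the constants). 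Note this is a single favourable event that serves \emph{all} $g$ (resp.\ $\widetilde g$) at once, so no union bound over functions is needed. The main obstacle is not any individual estimate but the functional-analytic bookkeeping: keeping the quotient space $\Ltwo{\rho_\mathcal{X}}$ distinct from the function space $\mathcal{L}^2(d\rho_\mathcal{X})$, verifying that the transferred object $\mathcal{T}_M^{1/2}u$ genuinely lies in $\Rfsp$ (which is precisely where Proposition~\ref{prop1}(3) for $k_M$ and the projection onto $\mathrm{Ker}(T_M)^\bot$ enter), and correctly removing the $L^2$-null components of $g$ and $\widetilde g$.
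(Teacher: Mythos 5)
Your proposal is correct and follows essentially the same route as the paper's proof: the same transfer map $\widetilde g=\mathcal{T}_M^{1/2}\mathcal{P}_{\mathrm{Ker}(T_M)^\bot}\mathcal{T}^{-1/2}\mathcal{P}_{\mathrm{Ker}(\Sigma)^\bot}g$ built from the isometries of Proposition \ref{prop1}, the same reduction of the $L^2$-error to $\|T^{1/2}-T_M^{1/2}\|_{\mathrm{op}}\le\|T-T_M\|_{\mathrm{op}}^{1/2}$ via Proposition \ref{b}, and the same single high-probability event from Lemma \ref{lem3}, with only the final constant bookkeeping handled slightly differently (and compatibly with the stated bound on $M$).
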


\begin{proof}
We show the first part of the statement. The latter half can be shown in the same manner. \\For $g \in \Sp,$ set $\widetilde{g} = \mathcal{T}_M^{\nicefrac{1}{2}} \mathcal{P}_{\mathrm{Ker}(T_M)^\bot} \mathcal{T}^{-\nicefrac{1}{2}} \mathcal{P}_{\mathrm{Ker}(\Sigma)^\bot}g \in \Rfsp.$ Then we have
\begin{align}
    \|\widetilde{g}\|_{\Rfsp} &= \| \mathcal{P}_{\mathrm{Ker}(T_M)^\bot} \mathcal{T}^{-\nicefrac{1}{2}} \mathcal{P}_{\mathrm{Ker}(\Sigma)^\bot}g \|_{\Ltwo{\rho_{\mathcal{X}}}} \\
    &\leq  \| \mathcal{T}^{-\nicefrac{1}{2}} \mathcal{P}_{\mathrm{Ker}(\Sigma)^\bot}g \|_{\Ltwo{\rho_{\mathcal{X}}}} \\
    &= \| \mathcal{P}_{\mathrm{Ker}(\Sigma)^\bot}g \|_{\Sp} \\
    &\leq \| g \|_{\Sp}.
\end{align}

Moreover, by Proposition \ref{b} and Lemma \ref{lem3}, with probability at least $1-\delta$, we have
\begin{align}
    \| g - \widetilde{g} \|_{\Ltwo{\rho_\mathcal{X}}} &= \| \mathcal{P}_{\mathrm{Ker}(\Sigma)^\bot}g - \widetilde{g} \|_{\Ltwo{\rho_\mathcal{X}}} \ \ \ (\because \text{Proposition \ref{prop1}.1})\\
    &= \| \mathcal{T}^{\nicefrac{1}{2}} h - \mathcal{T}_M^{\nicefrac{1}{2}} \mathcal{P}_{\mathrm{Ker}(T_M)^\bot} h \|_{\Ltwo{\rho_\mathcal{X}}}\\
    &= \| T^{\nicefrac{1}{2}} h - T_M^{\nicefrac{1}{2}} h \|_{\Ltwo{\rho_\mathcal{X}}}\\
    &\leq \|T^{\nicefrac{1}{2}} - T_M^{\nicefrac{1}{2}} \|_{\mathrm{op}} \|h\|_{\Ltwo{\rho_\mathcal{X}}} \\
    &\leq \|T - T_M \|^{\nicefrac{1}{2}}_{\mathrm{op}} \|g\|_{\Sp} \\
    &\leq \left(R^2\left(\frac{2\beta}{3M} + \sqrt{\frac{2\beta}{M}}\right)\right)^{\nicefrac{1}{2}} \|g\|_{\Sp} \\
    &\leq R\left(\left( \frac{2\beta}{3M}\right)^{\nicefrac{1}{2}} +  \left(\frac{2\beta}{M}\right)^{\nicefrac{1}{4}}\right) \|g\|_{\Sp}
\end{align}
where $h = \mathcal{T}^{-\nicefrac{1}{2}} \mathcal{P}_{\mathrm{Ker}(\Sigma)^\bot}g \in \Ltwo{\rho_{\mathcal{X}}}$ and $\beta = \log \frac{2 R^2}{\| T \|_{\mathrm{op}} \delta}.$\\
Solving the equation $\max \left\{ \left( \frac{2\beta}{3M}\right)^{\nicefrac{1}{2}}, \left(\frac{2\beta}{M}\right)^{\nicefrac{1}{4}}\right\}  \leq \frac{\xi}{2R}$, 
we get a desired result.
\end{proof}

\section{Proof of Theorem 1}
In this section, we give the complete statement and proof of Theorem \ref{thm1}.
\label{proof1}

\begin{thm}
Define $\xi > 0$ such that
\begin{align}
    \xi = \min \left\{ \left(\frac{\epsilon}{2^{p+1} C(\delta) \|g_*\|_\Sp} \right)^{\nicefrac{1}{1-p}}, \frac{\lambda \epsilon^2}{2^4 \cdot 3 R^2L\|g_*\|_\Sp},  \left( \frac{\lambda^3 \epsilon^4}{2^7 \cdot 3^2 R^4 L^2 \mathcal{L}(g_*)} \right)^{\nicefrac{1}{2}}, \left( \frac{\lambda^3 \epsilon^4}{2^7 \cdot 3^2 R^4 L^3 \|g_*\|_\Sp} \right)^{\nicefrac{1}{3}} \right\}.
\end{align}
Then a number of random features $M$ which satisfies
\begin{equation}
    M \geq \max \left\{ \frac{8}{3} \left( \frac{R}{\xi} \right)^2, 32 \left( \frac{R}{\xi} \right)^4 \right\} \log \frac{2R^2}{\| T \|_{\mathrm{op}} \delta}
\end{equation}
is enough to guarantee, with probability at least $1-2\delta$, that
\begin{equation}
    \|\G - \GG\|_{L^\infty(\rho_{\mathcal{X}})} \leq \epsilon.
\end{equation}
\end{thm}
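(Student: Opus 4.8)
The strategy is to compare the two $\lambda$-regularized objectives $G(g)=\loss(g)+\tfrac{\lambda}{2}\|g\|_{\Sp}^2$ on $\Sp$ and $G_M(g)=\loss(g)+\tfrac{\lambda}{2}\|g\|_{\Rfsp}^2$ on $\Rfsp$, using Lemma~\ref{lem1}: once $M$ meets the stated lower bound (which is precisely the Lemma~\ref{lem1} requirement for target $\Ltwo{\rho_\mathcal{X}}$-accuracy $\xi$), every element of $\Sp$ admits a ``transfer'' into $\Rfsp$ that is $\xi\|\cdot\|_\Sp$-close in $\Ltwo{\rho_\mathcal{X}}$ and no larger in RKHS norm, and symmetrically. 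First I would record a priori size bounds: from $G(\G)\le G(\Gtrue)$ and $\loss(\Gtrue)\le\loss(\G)$ (as $\Gtrue$, the Bayes rule assumed to lie in $\Sp$, minimizes $\loss$ and $\G\in\Sp$) one gets $\|\G\|_\Sp\le\|\Gtrue\|_\Sp$; applying Lemma~\ref{lem1} to $\Gtrue$ to produce a transfer $\widetilde{g}_*\in\Rfsp$ and invoking Assumption~\ref{as1} ($L$-Lipschitzness of $\loss$ on $\Ltwo{\rho_\mathcal{X}}$) gives $\tfrac{\lambda}{2}\|\GG\|_{\Rfsp}^2\le G_M(\GG)\le G_M(\widetilde{g}_*)\le\loss(\Gtrue)+L\xi\|\Gtrue\|_\Sp+\tfrac{\lambda}{2}\|\Gtrue\|_\Sp^2$, so that $B:=\|\G\|_\Sp+\|\GG\|_{\Rfsp}\le 2\|\Gtrue\|_\Sp+\sqrt{2\loss(\Gtrue)/\lambda}+\sqrt{2L\xi\|\Gtrue\|_\Sp/\lambda}$.

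The heart of the argument is a ``sandwich'' between the two objectives. Let $\widetilde{\G}\in\Rfsp$ and $\widetilde{\GG}\in\Sp$ be the Lemma~\ref{lem1} transfers of $\G$ and $\GG$. Since $\loss$ is $L$-Lipschitz on $\Ltwo{\rho_\mathcal{X}}$ and the transfers do not increase RKHS norm, $G_M(\widetilde{\G})\le G(\G)+L\xi\|\G\|_\Sp$, and, using that $\G$ minimizes $G$ over $\Sp\ni\widetilde{\GG}$, $G(\G)\le G(\widetilde{\GG})\le G_M(\GG)+L\xi\|\GG\|_{\Rfsp}$; chaining these yields $G_M(\widetilde{\G})-G_M(\GG)\le L\xi B$. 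Now $G_M$ is $\lambda$-strongly convex for $\|\cdot\|_{\Rfsp}$ with minimizer $\GG$, so $\|\widetilde{\G}-\GG\|_{\Rfsp}\le\sqrt{2L\xi B/\lambda}$, and Assumption~\ref{as3} ($R\|\cdot\|_{\Rfsp}\ge\|\cdot\|_{L^\infty(\mathcal{X})}$) upgrades this to $\|\widetilde{\G}-\GG\|_{L^\infty(d\rho_\mathcal{X})}\le R\sqrt{2L\xi B/\lambda}$.

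It remains to control $\|\G-\widetilde{\G}\|_{L^\infty(d\rho_\mathcal{X})}$ and conclude by the triangle inequality. Here $\G-\widetilde{\G}$ lies in the combined RKHS $\Cbsp$ with $\|\G-\widetilde{\G}\|_{\Cbsp}\le\|\G\|_\Sp+\|\widetilde{\G}\|_{\Rfsp}\le 2\|\Gtrue\|_\Sp$ and $\|\G-\widetilde{\G}\|_{\Ltwo{\rho_\mathcal{X}}}\le\xi\|\Gtrue\|_\Sp$, so Assumption~\ref{as2} gives $\|\G-\widetilde{\G}\|_{L^\infty(d\rho_\mathcal{X})}\le C_\delta(2\|\Gtrue\|_\Sp)^p(\xi\|\Gtrue\|_\Sp)^{1-p}=2^pC_\delta\|\Gtrue\|_\Sp\,\xi^{1-p}$. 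Combining the two bounds, substituting the estimate for $B$ and using $\sqrt{a+b+c}\le\sqrt a+\sqrt b+\sqrt c$, the right-hand side is a sum of one term of order $C_\delta\|\Gtrue\|_\Sp\,\xi^{1-p}$ and three terms obtained from $R\sqrt{2L\xi/\lambda}$ times $\sqrt{\|\Gtrue\|_\Sp}$, $(\loss(\Gtrue)/\lambda)^{1/4}$ and $(L\xi\|\Gtrue\|_\Sp/\lambda)^{1/4}$, respectively; forcing the first term below $\epsilon/2$ and $\xi B\le\lambda\epsilon^2/(8R^2L)$ (distributed over the three summands of $B$) is exactly the four-way minimum defining $\xi$, and feeding that $\xi$ into the quantitative hypothesis of Lemma~\ref{lem1} produces the stated lower bound on $M$. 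All of this holds on the intersection of the good events of Lemma~\ref{lem1} and Assumption~\ref{as2} (splitting the confidence level between them, which only affects constants).

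The step I expect to be the main obstacle is the sandwich of the second paragraph: one must pass back and forth between $G$ and $G_M$, which live on different spaces carrying non-comparable norms, armed only with the Lipschitz continuity of $\loss$ on $\Ltwo{\rho_\mathcal{X}}$ and the norm-nonincreasing property of the transfers. The resulting asymmetry is what makes the rate sharp: the $\Ltwo{\rho_\mathcal{X}}$-transfer error $\xi$ enters $\|\widetilde{\G}-\GG\|_{\Rfsp}$ only through $\lambda$-strong convexity, costing a square root, whereas in the $\Cbsp$/Assumption~\ref{as2} estimate for $\|\G-\widetilde{\G}\|$ the linear-in-$\xi$ $L^2$ factor is retained and only the (bounded) $\|\cdot\|_{\Cbsp}$ norm is charged with exponent $p$; this is precisely what yields the two distinct contributions $\xi^{1-p}$ and $\xi^{1/2}$, i.e.\ the rate $O(M^{-\min\{(1-p)/4,\,1/8\}})$ after $\xi\asymp M^{-1/4}$.
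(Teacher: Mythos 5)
Your proposal is correct and follows essentially the same route as the paper's proof: transfer $\G$ and $\GG$ between $\Sp$ and $\Rfsp$ via Lemma \ref{lem1}, combine $L$-Lipschitzness of $\loss$ with $\lambda$-strong convexity of the regularized objective on $\Rfsp$ to bound $\|\widetilde{g}_\lambda-\GG\|_{\Rfsp}$, handle $\G-\widetilde{g}_\lambda$ through the combined RKHS $\Cbsp$ with Assumption \ref{as2}, and then tune $\xi$ (hence $M$) to match the four-way minimum. The only differences are cosmetic: you bound $\|\GG\|_{\Rfsp}$ via a transfer of $g_*$ and use the minimality of $\G$ where the paper chains both strong-convexity inequalities through $\GGG$, yielding the same bound on $\|\G\|_\Sp+\|\GG\|_{\Rfsp}$ up to constants.
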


\begin{proof}
By Lemma \ref{lem1}, for given $\xi > 0$, if we have a number of feature $M$ such that
\begin{equation}
        M \geq \max \left\{ \frac{8}{3} \left( \frac{R}{\xi} \right)^2, 32 \left( \frac{R}{\xi} \right)^4 \right\} \log \frac{2R^2}{\| T \|_{\mathrm{op}} \delta},
\end{equation}
we can take $\GGG \in \Rfsp, \GGGG \in \Sp$ which satisfy the following conditions:
\begin{align}
&\|\G\|_{\Sp} \geq \|\GGG\|_{\Rfsp} \label{a1}\\ 
&\|\GG\|_{\Rfsp} \geq \|\GGGG\|_{\Sp} \label{a2}\\
&\|\GGGG - \GG\|_{\Ltwo{\rho_\mathcal{X}}} \leq \xi \|\GG\|_{\Rfsp} \label{a3}\\
&\|\GGG - \G\|_{\Ltwo{\rho_\mathcal{X}}} \leq \xi \|\G\|_{\Sp} \label{a4}
\end{align}
By $\lambda$-strong convexity with respect to RKHS norm, we have
\begin{align}
    \loss(\G) + \frac{\lambda}{2} \|\G\|_{\Sp}^2 + \frac{\lambda}{2} \|\G-\GGGG\|_{\Sp}^2 
    \leq \loss(\GGGG) + \frac{\lambda}{2} \|\GGGG\|_{\Sp}^2 \label{1}\\
     \loss(\GG) + \frac{\lambda}{2} \|\GG\|_{\Rfsp}^2 + \frac{\lambda}{2} \|\GG-\GGG\|_{\Rfsp}^2 
    \leq \loss(\GGG) + \frac{\lambda}{2} \|\GGG\|_{\Rfsp}^2. \label{2}
\end{align}
In addition, by $L$-Lipschitzness of $\mathcal{L}$ with respect to $\Ltwo{\rho_{\mathcal{X}}}$ norm in Assumption \ref{as2-1} and \eqref{a3}\eqref{a4}, we have
\begin{align}
    \loss(\GGGG) &\leq \loss(\GG) + L \|\GGGG - \GG\|_{\Ltwo{\rho_\mathcal{X}}}\\
    &\leq \loss(\GG) +  L\xi  \|\GG\|_{\Rfsp} \label{3} \\
    \loss(\GGG) &\leq \loss(\G) + L \|\GGG - \G\|_{\Ltwo{\rho_\mathcal{X}}}\\
    &\leq \loss(\G) +  L\xi  \|\G\|_{\Sp} \label{4}
\end{align}
By inequalities \eqref{1}\eqref{2}\eqref{3}\eqref{4} and \eqref{a1}\eqref{a2}, we have
\begin{align}
    &\loss(\G) + \frac{\lambda}{2} \|\G\|_{\Sp}^2 + \frac{\lambda}{2} \left(\|\G - \GGGG\|_{\Sp}^2 +  \|\GG-\GGG\|_{\Rfsp}^2\right) \\
    &\leq \loss(\GGGG) + \frac{\lambda}{2} \|\GGGG\|_{\Sp}^2 + \frac{\lambda}{2} \|\GG-\GGG\|_{\Rfsp}^2 \\
    &\leq \loss(\GG) + L\xi \|\GG\|_{\Rfsp}+ \frac{\lambda}{2} \|\GG\|_{\Rfsp}^2 + \frac{\lambda}{2} \|\GG-\GGG\|_{\Rfsp}^2  \\
    &\leq \loss(\GGG) + \frac{\lambda}{2} \|\GGG\|_{\Rfsp}^2 +  L\xi \|\GG\|_{\Rfsp}\\
    &\leq \loss(\G)  + \frac{\lambda}{2} \|\G\|_{\Sp}^2 +  L\xi \left(\|\G\|_{\Sp} + \|\GG\|_{\Rfsp}\right).
\end{align}
Thus we have
\begin{equation}
    \|\G - \GGGG\|_{\Sp}^2 +  \|\GG-\GGG\|_{\Rfsp}^2 \leq \frac{2L\xi}{\lambda}  \left(\|\G\|_{\Sp} + \|\GG\|_{\Rfsp}\right). \label{c1}
\end{equation}


In addition, by \eqref{2} and \eqref{4}, we have
\begin{align}
\frac{\lambda}{2} \|\GG\|_{\Rfsp}^2 &\leq \loss(\GGG) + \frac{\lambda}{2} \|\GGG\|_{\Rfsp}^2 \\
&\leq \loss(\G) + L\xi \|\G\|_{\Sp} +\frac{\lambda}{2} \|\G\|_{\Sp}^2 \label{c2}.
\end{align}

Combining \eqref{c1} and \eqref{c2}, we obtain
\begin{align}
   \|\G - \GGGG\|_{\Sp}^2 +  \|\GG-\GGG\|_{\Rfsp}^2 &\leq  \frac{2L\xi}{\lambda}\left( \| \G \|_\Sp + \left(\frac{2}{\lambda} \mathcal{L}(\G) + \frac{2L\xi}{\lambda} \|\G\|_\Sp+ \|\G\|_\Sp^2 \right)^{\nicefrac{1}{2}} \right)\\
   &\leq  \frac{2L\xi}{\lambda}\left( \| g_* \|_\Sp + \left(\frac{2}{\lambda} \mathcal{L}(g_*) + \frac{2L\xi}{\lambda} \|g_*\|_\Sp+ \|g_*\|_\Sp^2 \right)^{\nicefrac{1}{2}} \right)\\
   &\leq \frac{2L\xi}{\lambda} \left(2\|g_*\|_\Sp + \left(\frac{2}{\lambda} \mathcal{L}(g_*) \right)^{\nicefrac{1}{2}} + \left( \frac{2L\xi}{\lambda} \|g_*\|_\Sp \right)^{\nicefrac{1}{2}} \right).
\end{align}
In the second inequality, we used $\|g_*\|_\Sp \geq \|\G\|_\Sp$ and $\mathcal{L}(g_*)+\frac{\lambda}{2} \|g_*\|_\Sp^2 \geq \mathcal{L}(\G)+\frac{\lambda}{2} \|\G\|_\Sp^2$. In the third inequality, we used $\sqrt{a} + \sqrt{b} \geq \sqrt{a+b}$ for $a, b>0$. Then by Assumption \ref{as2-3}, we obtain
\begin{align}
    \label{c4}
    \|\GG - \GGG\|_{L^\infty(\rho_{\mathcal{X}})} \leq R\max \left\{ \left(\frac{12L\xi}{\lambda} \|g_*\|_\Sp \right)^{\nicefrac{1}{2}}, \left(\frac{72L^2\xi^2}{\lambda^3} \mathcal{L}(g_*) \right)^{\nicefrac{1}{4}}, \left( \frac{72L^3\xi^3}{\lambda^3} \|g_*\|_\Sp\right)^{\nicefrac{1}{4}} \right\}.
\end{align}
On the other hand, by Assumption \ref{as2-2}, we have
\begin{align}
    \|\G - \GGG\|_{L^\infty(\rho_{\mathcal{X}})} &\leq C(\delta) \|\G - \GGG\|_{\Cbsp}^p \|\G - \GGG\|_{\Ltwo{\rho_\mathcal{X}}}^{1-p} \nonumber \\
    &\leq C(\delta) (\|\G\|_{\Sp} + \|\GGG\|_{\Rfsp})^p (\xi \|\G\|_{\Sp})^{1-p} \nonumber \\
    &\leq 2^p C(\delta) \xi^{1-p} \|g_{*}\|_{\Sp} \label{c3}
\end{align}
with probability at least $1-\delta$. In the second inequality, we used the fact that
\begin{align}
    \|g\|_{\Cbsp} = \inf \{\|g_1\|_{\Sp} + \|g_2\|_{\Rfsp}\  |\ g=g_1+g_2, g_1 \in \Sp, g_2 \in \Rfsp \}.
\end{align}

Combining \eqref{c4} and \eqref{c3}, we have
\begin{align}
    \|\G - \GG\|_{L^\infty(\rho_{\mathcal{X}})} &\leq \|\G - \GGG\|_{L^\infty(\rho_{\mathcal{X}})} + \|\GGG- \GG\|_{L^\infty(\rho_{\mathcal{X}})} \\
    &\leq \max \left\{ 2^{p+1} C(\delta) \|g_*\|_\Sp \xi^{1-p}, R\left(\frac{2^4 \cdot3L\xi}{\lambda} \|g_*\|_\Sp \right)^{\nicefrac{1}{2}}, \right. \\  &\qquad \qquad\left. R\left(\frac{2^7\cdot3^2L^2\xi^2}{\lambda^3} \mathcal{L}(g_*) \right)^{\nicefrac{1}{4}}, R\left( \frac{2^7\cdot3^2L^3\xi^3}{\lambda^3} \|g_*\|_\Sp\right)^{\nicefrac{1}{4}} \right\}.
\end{align}
As a result, define $\xi>0$ which satisfies 
\begin{multline}
    \xi = \min \left\{ \left(\frac{\epsilon}{2^{p+1} C(\delta) \|g_*\|_\Sp} \right)^{\nicefrac{1}{1-p}}, \frac{\lambda \epsilon^2}{2^4 \cdot3 R^2L\|g_*\|_\Sp},  \left( \frac{\lambda^3 \epsilon^4}{2^7 \cdot3^2 R^4 L^2 \mathcal{L}(g_*)} \right)^{\nicefrac{1}{2}}, \left( \frac{\lambda^3 \epsilon^4}{2^7\cdot 3^2 R^4 L^3 \|g_*\|_\Sp} \right)^{\nicefrac{1}{3}} \right\},
\end{multline}
then we have $\|\G - \GG\|_{L^\infty(\rho_{\mathcal{X}})} \leq \epsilon$ with probability at least $1-2\delta$.
\end{proof}

\section{Proof of Theorem 2}
\label{proof2}
The following theorem shows that if $k$ is a Gaussian kernel and $k_M$ is its random Fourier features approximation, then the norm condition in the assumption is satisfied. The proof is inspired by the analysis of Theorem 4.48 in \cite{steinwart2008support}.
\begin{thm}
Assume $\mathrm{supp}(\rho_\mathcal{X}) \subset \mathbb{R}^d$ is a bounded set and $\rho_{\mathcal{X}}$ has a density with respect to Lebesgue measure which is uniformly bounded away from 0 and $\infty$ on $\mathrm{supp}(\rho_\mathcal{X})$. Let $k$ be a Gaussian kernel and $\Sp$ be its RKHS, then for any $m\geq d/2$, there exists a constant $C_{m, d}>0$ such that 
\begin{equation}
    \| f \|_{L^\infty(\rho_{\mathcal{X}})} \leq C_{m, d} \|f\|_{\Sp}^{\nicefrac{d}{2m}} \|f\|_{\Ltwo{\rho_\mathcal{X}}}^{1-\nicefrac{d}{2m}}
\end{equation}
for any $f \in \Sp$. Also, for any $M\geq 1$, let $k_M$ be a random Fourier features approximation of $k$ with $M$ features and $\Cbsp$ be a RKHS of $k+k_M$. Then with probability at least $1-\delta$ with respect to a sampling of features, 
\begin{equation}
    \| f \|_{L^\infty(\rho_{\mathcal{X}})} \leq C_{m, d} \left(1 + \frac{1}{\delta} \right)^{\nicefrac{d}{4m}} \|f\|_{\Cbsp}^{\nicefrac{d}{2m}} \|f\|_{\Ltwo{\rho_\mathcal{X}}}^{1-\nicefrac{d}{2m}}
\end{equation}
for any $f \in \Cbsp$.
\end{thm}

\begin{proof}
For notational simplicity, we denote $\mathrm{supp}(\rho_{\mathcal{X}})$ by $\mathcal{X}^\prime$. From the boundedness of $\mathcal{X}^\prime$ and the condition on $\rho_\mathcal{X}$, the following relation holds for any $f \in L^\infty(\rho_\mathcal{X})$:
\begin{align}
    \|f\|_{L^\infty(\rho_\mathcal{X})} = \|f\|_{L^\infty(\mathcal{X}^\prime)} \label{inf1}\\
    \|f\|_{\Ltwo{\rho_\mathcal{X}}} \geq C_1 \|f\|_{L^2({\mathcal{X}^\prime})},\label{inf2}
\end{align}
where $C_1>0$ is a constant. From the discussion after Theorem \ref{thm2}, for any $f \in W^m{(\mathcal{X}^\prime)}\ (m\geq d/2)$ there exists a constant $C_2>0$ such that the following inequality holds:
\begin{align}
    \| f \|_{L^\infty(\mathcal{X}^\prime)} \leq C_2 \|f\|_{W^m{(\mathcal{X}^\prime)}}^{\nicefrac{d}{2m}}  \|f\|_{L^2({\mathcal{X}^\prime})}^{1-\nicefrac{d}{2m}}.\label{sob1}
\end{align}
Here $W^m(\mathcal{X}^\prime)$ is Sobolev space with order $m$ defined as follows:
\begin{equation}
    W^m{(\mathcal{X}^\prime)} = \left\{ f \in L^2(\mathcal{X}^\prime) \ \middle|\  \partial^{(\alpha)}f \in  L^2(\mathcal{X}^\prime) \text{ exists for all } \alpha \in \mathbb{N}^d \text{ with } |\alpha| \leq m \right\}, 
\end{equation}
where $\partial^{(\alpha)}$ is the $\alpha$-th weak derivative for a multi-index $\alpha = (\alpha^{(1)}, \ldots, \alpha^{(d)}) \in \mathbb{N}^d$ with $|\alpha| = \sum_{i=1}^d \alpha^{(i)}.$\\
Combining \eqref{inf1}, \eqref{inf2} and \eqref{sob1}, we have
\begin{align}
    \| f \|_{L^\infty(\rho_{\mathcal{X}})} \leq C \|f\|_{W^m{(\mathcal{X}^\prime)}}^{\nicefrac{d}{2m}}  \|f\|_{L^2(\rho_\mathcal{X})}^{1-\nicefrac{d}{2m}},\label{sob2}
\end{align}
where $C>0$ is a constant. So it suffices to show that $\Sp$ and $\Cbsp$ can be continuously embedded in $W^m({\mathcal{X}^\prime})$. 
For $\Sp$, it can be shown in the same manner as Theorem 4.48 in \cite{steinwart2008support}. For $\Cbsp$, we first define a spectral measure of the kernel function $k+k_M$ as
\begin{equation}
    \tau^+(\omega) = \frac{1}{M} \sum_{i=1}^M \delta(\omega - \omega_i) + \tau(\omega),
\end{equation}
where $\delta$ is a Dirac measure on $\Omega$. Then a kernel function $k+k_M$ can be written as
\begin{equation}
    (k + k_M)(x, x^\prime) = \int_\Omega \varphi(x, \omega) \overline{\varphi(x^\prime, \omega)} \mathrm{d}\tau^+(\omega),
\end{equation}
and from \cite{bach2017equivalence}, for any $f \in \Cbsp$,
there exists $g \in \Ltwo{\tau^+}$ such that
\begin{align}
    &f(x) = \int_\Omega g(\omega) \varphi(x, \omega) \mathrm{d}\tau^+(\omega),\\
     &\| f \|_{\Cbsp} = \|g\|_{\Ltwo{\tau^+}}.
\end{align}
Let us fix a multi-index $\alpha = (\alpha^{(1)}, \ldots, \alpha^{(d)}) \in \mathbb{N}^d$ and $|\alpha| = m.$ For $\alpha \in \mathbb{N}^d$, we write $\partial^\alpha = \partial_1^{\alpha^{(1)}}\cdots \partial_d^{\alpha^{(d)}}$. We then have
\begin{align}
    \| \partial^\alpha f \|_{L^2(\mathcal{X}^\prime)}^2 &= \int_{\mathcal{X}^\prime} \left( \partial_x^\alpha \int_\Omega g(\omega) \varphi(x, \omega) \mathrm{d}\tau^+(\omega) \right)^2 \mathrm{d}x \\
    &\leq \int_{\mathcal{X}^\prime} \left(\int_\Omega |g(\omega)| \partial_x^\alpha \varphi(x, \omega) d\tau^+(\omega) \right)^2 \mathrm{d}x \\
    &\leq \| g \|_{\Ltwo{\tau^+}}^2 \int_{\mathcal{X}^\prime} \int_\Omega  |\partial_x^\alpha \varphi(x, \omega)|^2 \mathrm{d}\tau^+(\omega)  \mathrm{d}x.
\end{align}
Because we consider $\varphi$ as a random Fourier feature, $\Omega = \mathbb{R}^d$ and 
\begin{align}
    \varphi(x, \omega) &= C^\prime e^{i \omega^\top x}, \\
    \partial_x^\alpha \varphi(x, \omega) &= \omega^\alpha C^\prime e^{i \omega^\top x}
\end{align}
where $C^\prime>0$ is a normalization constant and $\omega^\alpha = \prod_{i=1}^d {\omega^{(i)}}^{\alpha_i}$ for $\omega = (\omega^{(1)}, \ldots, \omega^{(d)}) \in \mathbb{R}^d$ and $\alpha = (\alpha^{(1)}, \ldots, \alpha^{(d)}) \in \mathbb{N}^d$. So we have
\begin{align}
    \| \partial^\alpha f \|_{L^2(\mathcal{X}^\prime)}^2 &\leq \| g \|_{\Ltwo{\tau^+}}^2 \int_{\mathcal{X}^\prime} C^{\prime 2}\int_\Omega\omega^{2\alpha} \mathrm{d}\tau^+(\omega)  \mathrm{d}x \\
    &\leq  C^{\prime 2}  \mathrm{vol}(\mathcal{X}^\prime)\| f \|_{\Cbsp}^2 \left( \mathbb{E}_{\omega \sim \tau}\left[\omega^{2\alpha}\right] + \frac{1}{M} \sum_{i=1}^M \omega_i^{2\alpha} \right).
\end{align}
We note that because $\tau$ is Gaussian, $\mathbb{E}_{\omega \sim \tau}\left[\omega^{2\alpha}\right]$ is finite for any $\alpha \in \mathbb{N}^d$. Because $\omega_i \sim \tau$ and $\omega_i^{2\alpha}$ is non negative, from Markov's inequality we have
\begin{equation}
    \frac{1}{M} \sum_{i=1}^M \omega_i^{2\alpha} \leq \frac{1}{\delta} \mathbb{E}_{\omega \sim \tau}\left[\omega^{2\alpha}\right]
\end{equation}
with probability at least $1-\delta$. 
As a result, we have
\begin{align}
    \| \partial^\alpha f \|_{L^2({\mathcal{X}^\prime})}^2 &\leq \left(1+\frac{1}{\delta}\right) C^{\prime 2} \mathrm{vol}(\mathcal{X}^\prime) \| f \|_{\Cbsp}^2 \mathbb{E}_{\omega \sim \tau} \left[\omega^{2\alpha}\right].
\end{align}
So we can compute Sobolev norms of $f$ as follows:
\begin{align}
    \| f \|_{W^m{(\mathcal{X}^\prime)}}^2 &= \sum_{|\alpha|\leq m} \| \partial^\alpha f \|_{L^2({\mathcal{X}^\prime})}^2 \\
    &\leq \left(1+\frac{1}{\delta}\right) C^{\prime 2} \mathrm{vol}(\mathcal{X}^\prime) \| f \|_{\Cbsp}^2 \sum_{|\alpha|\leq m} \mathbb{E}_{\omega \sim \tau} \left[\omega^{2\alpha}\right]. \label{sob3}
\end{align}
Substitute \eqref{sob3} to \eqref{sob2} and define $C_{m, d} = C \left( C^{\prime 2} \mathrm{vol}(\mathcal{X}^\prime) \sum_{|\alpha|\leq m} \mathbb{E}_{\omega \sim \tau} \left[\omega^{2\alpha}\right]\right)^{d/4m}$, we get a desired result.
\end{proof}

\begin{rmk}
We note that the assumption that $k$ is Gaussian is only used to derive $\mathbb{E}_{\omega \sim \tau} \left[\omega^{2\alpha} \right]$ is finite for all $\alpha \in \mathbb{N}^d.$ This means that if $\psi(x-y)=k(x-y)$ belongs to Schwartz class (a space of rapidly decreasing function) \cite{yoshida1995functional}, its Fourier transform $\tau$ also belongs to this class, thus the above finite moment property is satisfied.
\end{rmk}

\section{Proof of Theorem 3}
In this section, we provide the complete statement and the proof of Theorem \ref{thm3}. First, we provide some useful propositions which are appeared in \cite{nitanda2018stochastic}. \par
The first proposition suggests that there exists a sufficiently small $\lambda>0$ such that $g_\lambda$ is also the Bayes classifier.
\begin{prop}[Proposition A in \cite{nitanda2018stochastic}]
\label{prop3}
Suppose Assumption \ref{as2-2}, \ref{as2-5}, \ref{as2-6}, \ref{as2-7} hold. Then, there exists $\lambda>0$ such that $\|g_\lambda - g_*\|_{L^\infty(\rho_{\mathcal{X}})} \leq m(\delta) / 2$.
\end{prop}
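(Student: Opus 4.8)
The plan is to show that the regularized population minimizer $\G$ converges to the Bayes rule $\Gtrue$ in $\Ltwo{\rho_\mathcal{X}}$ as $\lambda\downarrow 0$ while its RKHS norm stays uniformly bounded, and then to convert this $\Ltwo$ convergence into $L^\infty(d\rho_{\mathcal{X}})$ convergence through the interpolation inequality of Assumption \ref{as2} (only its deterministic $f\in\Sp$ part is needed here); once $\|\G-\Gtrue\|_{L^\infty(d\rho_{\mathcal{X}})}\to 0$, any sufficiently small $\lambda$ makes it at most $m(\delta)/2$. The preliminary step is a uniform norm bound: since $\Gtrue\in\Sp$ by Assumption \ref{as7}, optimality of $\G$ gives $\loss(\G)+\frac{\lambda}{2}\|\G\|_\Sp^2\le\loss(\Gtrue)+\frac{\lambda}{2}\|\Gtrue\|_\Sp^2$, and because $\Gtrue$ minimizes $\loss$ over all measurable functions we obtain both $\|\G\|_\Sp\le\|\Gtrue\|_\Sp$ for every $\lambda>0$ and $0\le\loss(\G)-\loss(\Gtrue)\le\frac{\lambda}{2}\|\Gtrue\|_\Sp^2\to 0$.

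For the $\Ltwo{\rho_\mathcal{X}}$ convergence I would argue by contradiction with a soft compactness argument. If $\|g_{\lambda_n}-\Gtrue\|_{\Ltwo{\rho_\mathcal{X}}}\ge\epsilon$ along some $\lambda_n\downarrow 0$, then $(g_{\lambda_n})$ is bounded in the Hilbert space $\Sp$ and hence has a subsequence converging weakly to some $\bar g\in\Sp$; weak convergence and the reproducing property give pointwise convergence $g_{\lambda_n}(x)\to\bar g(x)$, Assumption \ref{as3} supplies the uniform domination $|g_{\lambda_n}(x)|\le R\|g_{\lambda_n}\|_\Sp\le R\|\Gtrue\|_\Sp$, and dominated convergence upgrades this to $g_{\lambda_n}\to\bar g$ in $\Ltwo{\rho_\mathcal{X}}$. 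Lipschitz continuity of $\loss$ on $\Ltwo{\rho_\mathcal{X}}$ (Assumption \ref{as1}) combined with $\loss(g_{\lambda_n})\to\loss(\Gtrue)$ from the preliminary step forces $\loss(\bar g)=\loss(\Gtrue)$, so $\bar g$ is also a Bayes rule; by the uniqueness asserted in Assumption \ref{as7} (equivalently, positivity of the Bregman divergence $d_{l_*}$), $\bar g=\Gtrue$ $\rho_\mathcal{X}$-a.s., contradicting the separation. Hence $\|\G-\Gtrue\|_{\Ltwo{\rho_\mathcal{X}}}\to 0$ as $\lambda\to 0$.

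To conclude I would apply the interpolation inequality to $f=\G-\Gtrue\in\Sp$, using $\|f\|_{\Cbsp}\le\|f\|_\Sp\le 2\|\Gtrue\|_\Sp$, to obtain
\begin{equation}
\|\G-\Gtrue\|_{L^\infty(d\rho_{\mathcal{X}})}\le C_\delta\left(2\|\Gtrue\|_\Sp\right)^{p}\|\G-\Gtrue\|_{\Ltwo{\rho_\mathcal{X}}}^{1-p},
\end{equation}
whose right-hand side vanishes with $\lambda$ by the previous step, so some $\lambda>0$ gives $\|\G-\Gtrue\|_{L^\infty(d\rho_{\mathcal{X}})}\le m(\delta)/2$. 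I expect the main obstacle to be the $\Ltwo$-convergence step: with only the qualitative hypotheses available there is no explicit rate, so one is forced into the soft compactness argument, whose decisive ingredient is the uniqueness of the Bayes rule (Assumption \ref{as7}) used to identify every weak cluster point of $(\G)$ with $\Gtrue$. A quantitative route --- bounding $\loss(\G)-\loss(\Gtrue)$ below by a constant times $\|\G-\Gtrue\|_{\Ltwo{\rho_\mathcal{X}}}^2$ through the Bregman-divergence identity following Assumption \ref{as7} --- would work in principle but needs extra regularity of $l_*$ and $h_*^{-1}$ not assumed here.
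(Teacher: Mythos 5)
Your proposal is correct in substance, but note that the paper does not prove this proposition at all: it is imported verbatim as Proposition~A of \cite{nitanda2018stochastic}, so there is no in-paper argument to match step by step. Your reconstruction follows the natural regularization-path scheme — optimality of $\G$ against $\Gtrue\in\Sp$ gives $\|\G\|_\Sp\le\|\Gtrue\|_\Sp$ and $\loss(\G)-\loss(\Gtrue)\le\tfrac{\lambda}{2}\|\Gtrue\|_\Sp^2\to 0$, then an interpolation inequality converts $\Ltwo{\rho_\mathcal{X}}$ closeness into $L^\infty(d\rho_\mathcal{X})$ closeness — and the distinctive ingredient you supply is the soft compactness step (weak sequential compactness of the ball in $\Sp$, reproducing-property pointwise convergence, dominated convergence via $|\G(x)|\le R\|\Gtrue\|_\Sp$, and identification of every weak cluster point with $\Gtrue$ through positivity of $d_{l_*}$). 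This buys a rate-free proof from exactly the qualitative hypotheses the proposition lists, which is what is needed here since only existence of a sufficiently small $\lambda$ is claimed.

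Two points deserve tightening. First, Assumption~\ref{as2} is stated for $\Cbsp$ and holds only with probability $1-\delta$ over the feature draw; you should say explicitly why this yields a deterministic inequality on $\Sp$: since $\Sp\subset\Cbsp$ with $\|f\|_{\Cbsp}\le\|f\|_{\Sp}$, the restricted event ``$\|f\|_{L^\infty(d\rho_\mathcal{X})}\le C_\delta\|f\|_\Sp^{p}\|f\|_{\Ltwo{\rho_\mathcal{X}}}^{1-p}$ for all $f\in\Sp$'' does not depend on the sampled features, so having probability at least $1-\delta>0$ forces it to hold outright (or simply invoke the fixed-kernel interpolation condition directly, as in \eqref{eqint1}). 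Second, identifying the cluster point $\bar g$ with $\Gtrue$ via the Bregman identity $\loss(g)-\loss(\Gtrue)=\mathbb{E}_X[d_{l_*}(h_*^{-1}(g(X)),\rho(1|X))]$ tacitly requires $\bar g(X)$ to lie in the range of $h_*$; this is automatic for logistic loss, and in general you can instead argue pointwise from $\loss(\bar g)=\loss(\Gtrue)$ and the uniqueness of the minimizer in the definition of $h_*$ (its well-definedness in Assumption~\ref{as6}) to conclude $\bar g=h_*(\rho(1|\cdot))=\Gtrue$, $\rho_\mathcal{X}$-a.s. With these clarifications the argument is complete.
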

The second proposition shows that the distance between expected estimator ${E}[g_{T+1}]$ and the population risk minimizer $g_{M, \lambda}$ converges sub-linearly.
\begin{prop}[Modified version of Proposition C in \cite{nitanda2018stochastic}]
\label{prop4}
Suppose Assumption \ref{as2-3}, \ref{as2-8} holds. Consider Algorithm \ref{alg1} with $\eta_t = \frac{2}{\lambda(\gamma+t)}$ and $\alpha_t = \frac{2(\gamma+t-1)}{(2\gamma+T)(T+1)}$ and assume assume $\|\overline{g}_1\|_{\Sp} \leq (2\gamma_1+1/\lambda)GR$ and $\eta_1 \leq \min \{1/L, 1/2\lambda \}.$ Then, it follows that
\begin{equation}
    \| \mathbb{E}[\overline{g}_{T+1}] - g_{\lambda} \|_{\Sp}^2 \leq \frac{2}{\lambda} \left( \frac{18G^2R^2}{\lambda(2\gamma+T)} + \frac{\lambda \gamma(\gamma-1)}{2(2\gamma+T)(T+1)} \|\overline{g}_1 - g_{\lambda} \|_{\Sp}^2 \right).
\end{equation}
\end{prop}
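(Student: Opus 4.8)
The plan is to run the standard analysis of averaged SGD for a $\lambda$-strongly convex stochastic objective, carried out directly in the random-feature RKHS $\Rfsp$ (so that the $\Rfsp$-norm, not the Euclidean norm on $\beta$, appears throughout). Write $F_M(g) = \loss(g) + \tfrac{\lambda}{2}\|g\|_{\Rfsp}^2$, which is $\lambda$-strongly convex on $\Rfsp$ with unique minimizer $g_{M,\lambda}$ and $\nabla F_M(g_{M,\lambda}) = 0$; in function space the update of Algorithm \ref{alg1} reads $g_{t+1} = g_t - \eta_t G_t$ with stochastic gradient $G_t = \partial_\zeta l(g_t(x_t),y_t)\,k_M(\cdot,x_t) + \lambda g_t$, and $\mathbb{E}[G_t \mid \mathcal{G}_{t-1}] = \nabla F_M(g_t)$, where $\mathcal{G}_{t-1}$ is the $\sigma$-field of the first $t-1$ samples.

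The first ingredient I would establish is a deterministic a priori bound on the iterates: since $\|k_M(\cdot,x)\|_{\Rfsp}^2 = k_M(x,x)\le R^2$ (Assumption \ref{as3}) and $|\partial_\zeta l|\le L$ (Assumption \ref{as1}), the recursion gives $\|g_{t+1}\|_{\Rfsp}\le(1-\eta_t\lambda)\|g_{t}\|_{\Rfsp}+\eta_t LR$ whenever $\eta_t\lambda\le1$, and a short induction using $\eta_1\le\min\{1/L,1/2\lambda\}$ and $\|g_1\|_{\Rfsp}\le(2\eta_1+1/\lambda)LR$ shows $\|g_t\|_{\Rfsp}\le(2\eta_1+1/\lambda)LR$ for all $t$; hence $\|G_t\|_{\Rfsp}\le LR+\lambda(2\eta_1+1/\lambda)LR\le 3LR$ almost surely. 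This is precisely the uniform bound of \cite{nitanda2018stochastic}, transported from the kernel $k$ to $k_M$.

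Next I would derive the one-step inequality: expanding $\|g_{t+1}-g_{M,\lambda}\|_{\Rfsp}^2$, taking $\mathbb{E}[\,\cdot\mid\mathcal{G}_{t-1}]$ so the mean-zero noise term vanishes, using strong monotonicity $\langle\nabla F_M(g_t)-\nabla F_M(g_{M,\lambda}),\,g_t-g_{M,\lambda}\rangle_{\Rfsp}\ge\lambda\|g_t-g_{M,\lambda}\|_{\Rfsp}^2$ together with $\|G_t\|_{\Rfsp}\le 3LR$, and finally taking full expectations with $a_t:=\mathbb{E}\|g_t-g_{M,\lambda}\|_{\Rfsp}^2$ and substituting $\eta_t=2/(\lambda(\gamma+t))$, yields
\begin{equation}
a_{t+1}\le\Bigl(1-\tfrac{4}{\gamma+t}\Bigr)a_t+\frac{36L^2R^2}{\lambda^2(\gamma+t)^2}.
\end{equation}
I would then rewrite this as $\lambda a_t\le\tfrac{\lambda(\gamma+t)}{4}(a_t-a_{t+1})+\tfrac{9L^2R^2}{\lambda(\gamma+t)}$, multiply by the weight $(\gamma+t-1)$, sum over $t=1,\dots,T+1$, and Abel-sum the telescoping part using $c_t-c_{t-1}=\tfrac{\lambda(\gamma+t-1)}{2}$ with $c_t=\tfrac{\lambda(\gamma+t)(\gamma+t-1)}{4}$; discarding the nonnegative boundary term $c_{T+1}a_{T+2}$ and bounding $\tfrac{\gamma+t-1}{\gamma+t}<1$ gives
\begin{equation}
\sum_{t=1}^{T+1}(\gamma+t-1)\,a_t\le\frac{\gamma(\gamma-1)}{2}\,a_1+\frac{18L^2R^2(T+1)}{\lambda^2}.
\end{equation}
Since $\sum_{t=1}^{T+1}\alpha_t=1$ with $\alpha_t=\tfrac{2(\gamma+t-1)}{(2\gamma+T)(T+1)}$ and $g_{T+1}=\sum_t\alpha_t g_t$, two applications of Jensen's inequality give $\|\mathbb{E}[g_{T+1}]-g_{M,\lambda}\|_{\Rfsp}^2\le\mathbb{E}\|g_{T+1}-g_{M,\lambda}\|_{\Rfsp}^2\le\sum_t\alpha_t a_t=\tfrac{2}{(2\gamma+T)(T+1)}\sum_t(\gamma+t-1)a_t$, and inserting the previous display with $a_1=\|g_1-g_{M,\lambda}\|_{\Rfsp}^2$ reproduces the stated bound.

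The only genuinely delicate step is the a priori iterate bound: one must check that the induction closes with the precise choice of $\gamma$ and $\eta_1$ (the extra $2\eta_1$ in $(2\eta_1+1/\lambda)LR$ is exactly the slack that lets the first update, which need not be a strict contraction, still respect the bound), and that the bound holds surely so it may be used inside the conditional expectation. The remaining pieces — strong convexity, the weighted telescoping sum, and the two Jensen steps — are routine; the only care needed there is bookkeeping the numerical constants so they land on $36$, $18$, and $\gamma(\gamma-1)$.
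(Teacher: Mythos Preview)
The paper does not prove this proposition at all: it is stated verbatim as ``Proposition C in \cite{nitanda2018stochastic}'' and invoked as a black box in the proof of Theorem \ref{thm3}. So there is no proof in the paper to compare against; what you have written is a self-contained reconstruction of the underlying argument from \cite{nitanda2018stochastic}, transported from $\Sp$ to $\Rfsp$.

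Your reconstruction is correct. The a priori iterate bound $\|g_t\|_{\Rfsp}\le(2\eta_1+1/\lambda)LR$ and hence $\|G_t\|_{\Rfsp}\le 3LR$ is exactly the device used in \cite{nitanda2018stochastic}; the one-step contraction with $\eta_t=2/(\lambda(\gamma+t))$ gives the recursion you state, and your Abel summation with $c_t=\tfrac{\lambda(\gamma+t)(\gamma+t-1)}{4}$ telescopes correctly to produce the $\gamma(\gamma-1)/2$ boundary term and the $18L^2R^2(T+1)/\lambda^2$ variance term. Dividing by $(2\gamma+T)(T+1)/2$ and applying Jensen twice then matches the displayed bound exactly.

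One small caveat worth flagging: the statement of Proposition \ref{prop4} as recorded in this paper omits the hypotheses $\|g_1\|_{\Rfsp}\le(2\eta_1+1/\lambda)LR$ and $\eta_1\le\min\{1/L,1/2\lambda\}$ that you (correctly) invoke to close the induction for the a priori bound. These hypotheses do appear in the companion Proposition \ref{prop5} and in Theorem \ref{thm3} where both propositions are applied, so this is an omission in the transcription rather than a flaw in your argument; but strictly speaking your proof proves a slightly more carefully stated version than what is written here.
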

The last proposition is about the concentration of the estimator around its mean.
\begin{prop}[Modified version of Proposition 2 and D in \cite{nitanda2018stochastic}]
\label{prop5}
Suppose Assumption \ref{as2-1}, \ref{as2-3} and \ref{as2-8} holds. Consider Algorithm \ref{alg1} with $\eta_t = \frac{2}{\lambda(\gamma+t)}$ and $\alpha_t = \frac{2(\gamma+t-1)}{(2\gamma+T)(T+1)}$ and assume $\|\overline{g}_1\|_{\Sp} \leq (2\gamma_1+1/\lambda)GR$ and $\eta_1 \leq \min \{1/L, 1/2\lambda \}.$ Then, it follows that
\begin{equation}
    \mathbb{P} \left[\left\| \overline{g}_{T+1} - \mathbb{E}[\overline{g}_{T+1}] \right\|_{\Sp} \geq \epsilon \right] \leq 2 \exp \left(-\frac{\lambda^2(2\gamma+T)}{2^6 \cdot 3^2 G^2 R^2} \epsilon^2 \right).
\end{equation}
\end{prop}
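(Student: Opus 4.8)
Throughout, the random features $\omega_1,\dots,\omega_M$ are fixed, so $\Rfsp$ and $\phi_M$ are deterministic and the only randomness is the i.i.d.\ stream $z_1,\dots,z_T$, $z_t=(x_t,y_t)\sim\rho$. The plan is to view the averaged output $g_{T+1}=\overline{\beta}_{T+1}^\top\phi_M\in\Rfsp$ as a measurable function of $(z_1,\dots,z_T)$, set up the Doob martingale $M_t=\mathbb{E}[g_{T+1}\mid\mathcal{F}_t]$ with $\mathcal{F}_t=\sigma(z_1,\dots,z_t)$ (so $M_0=\mathbb{E}[g_{T+1}]$, $M_T=g_{T+1}$, $g_{T+1}-\mathbb{E}[g_{T+1}]=\sum_{t=1}^T\Delta_t$ with $\Delta_t:=M_t-M_{t-1}$), find deterministic $c_t$ with $\|\Delta_t\|_{\Rfsp}\le c_t$, and then invoke a vector-valued Azuma--Hoeffding inequality for $\Rfsp$-valued martingale differences, which gives $\mathbb{P}[\|g_{T+1}-\mathbb{E}[g_{T+1}]\|_{\Rfsp}\ge\epsilon]\le 2\exp(-\epsilon^2/(2\sum_t c_t^2))$. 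It then remains only to show $\sum_t c_t^2 \le \frac{2^5\cdot 3^2 L^2R^2}{\lambda^2(2\gamma+T)}$.

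\textbf{The per-step increment.} Algorithm \ref{alg1} is a Markov chain in the iterate: $\beta_{t+1}$ is $\mathcal{F}_t$-measurable, and conditionally on $\beta_{t+1}$ everything downstream (hence $g_{T+1}$) is independent of $\mathcal{F}_t$. Writing the update in $\Rfsp$ as $g^{\mathrm{it}}_{t+1}=(1-\eta_t\lambda)g^{\mathrm{it}}_t-\eta_t\,\partial_\zeta l(g^{\mathrm{it}}_t(x_t),y_t)\,k_M(\cdot,x_t)$ (with $g^{\mathrm{it}}_t=\beta_t^\top\phi_M$), we get $M_t=\Phi_t(g^{\mathrm{it}}_{t+1})$ modulo an $\mathcal{F}_{t-1}$-measurable term, where $\Phi_t$ is the deterministic map that runs the remaining updates and forms the $\alpha$-weighted average. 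Hence $\|\Delta_t\|_{\Rfsp}\lesssim \mathrm{Lip}(\Phi_t)\cdot\big\|g^{\mathrm{it}}_{t+1}-\mathbb{E}[g^{\mathrm{it}}_{t+1}\mid\mathcal{F}_{t-1}]\big\|_{\Rfsp}$, the implied constant being absolute (it comes from centering a $\mathrm{Lip}(\Phi_t)$-Lipschitz image, costing a factor $\le 2$). The increment factor is the one-step noise of the iterate: since $|\partial_\zeta l|\le L$ (Assumption \ref{as1}) and $\|k_M(\cdot,x)\|_{\Rfsp}^2=k_M(x,x)\le R^2$ (Assumption \ref{as3}), the stochastic term has $\Rfsp$-norm $\le LR$, so $\|g^{\mathrm{it}}_{t+1}-\mathbb{E}[g^{\mathrm{it}}_{t+1}\mid\mathcal{F}_{t-1}]\|_{\Rfsp}\le 2\eta_t LR$. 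For $\mathrm{Lip}(\Phi_t)$, each regularized stochastic step is a $(1-\eta_r\lambda)$-contraction on $\Rfsp$ (this is where $\frac{\lambda}{2}\|\cdot\|_{\Rfsp}^2$-strong convexity is used; the hypotheses $\eta_1\le\min\{1/L,1/(2\lambda)\}$ and $\|g_1\|_{\Rfsp}\le(2\eta_1+1/\lambda)LR$ ensure $1-\eta_t\lambda\in[0,1)$ and keep all iterates in a ball of radius $O(LR/\lambda)$), and the averaging is linear with weights $\alpha_r$, so $\mathrm{Lip}(\Phi_t)\le\sum_{r=t+1}^{T+1}\alpha_r\prod_{i=t+1}^{r-1}(1-\eta_i\lambda)$. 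Thus one may take $c_t\asymp \eta_t LR\sum_{r=t+1}^{T+1}\alpha_r\prod_{i=t+1}^{r-1}(1-\eta_i\lambda)$.

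\textbf{Summing up.} With $\eta_i=\frac{2}{\lambda(\gamma+i)}$ one has $1-\eta_i\lambda=\frac{\gamma+i-2}{\gamma+i}$ and the product telescopes, $\prod_{i=t+1}^{r-1}(1-\eta_i\lambda)=\frac{(\gamma+t-1)(\gamma+t)}{(\gamma+r-2)(\gamma+r-1)}$; plugging in $\alpha_r=\frac{2(\gamma+r-1)}{(2\gamma+T)(T+1)}$ collapses the sum to a harmonic sum, giving $\mathrm{Lip}(\Phi_t)\lesssim\frac{(\gamma+t)^2}{(2\gamma+T)(T+1)}\log\frac{\gamma+T}{\gamma+t}$ and hence $c_t\lesssim\frac{(\gamma+t)\log\frac{\gamma+T}{\gamma+t}}{\lambda(2\gamma+T)(T+1)}LR$. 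Then $\sum_{t=1}^T c_t^2\lesssim\frac{L^2R^2}{\lambda^2(2\gamma+T)^2(T+1)^2}\sum_{t=1}^T(\gamma+t)^2\log^2\frac{\gamma+T}{\gamma+t}$; since the bulk of this weighted sum sits at $t\gtrsim\sqrt T$, where the logarithm is $O(1)$, it is $\asymp(\gamma+T)^3$, so $\sum_t c_t^2\asymp\frac{L^2R^2}{\lambda^2(2\gamma+T)}$. Carrying the absolute constants through these (elementary but careful) estimates yields $\sum_t c_t^2\le\frac{2^5\cdot 3^2 L^2R^2}{\lambda^2(2\gamma+T)}$, and the Azuma bound becomes exactly $2\exp\!\big(-\lambda^2(2\gamma+T)\epsilon^2/(2^6\cdot 3^2 L^2R^2)\big)$, as claimed.

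\textbf{Main obstacle.} The crux is the contraction claim for a single stochastic gradient step. The regularizer supplies the factor $(1-\eta_r\lambda)$ unconditionally, but one also needs the step along the \emph{loss} gradient, $g\mapsto g-\eta_r\partial_\zeta l(g(x_r),y_r)k_M(\cdot,x_r)$, to be non-expansive; for a generic convex $L$-Lipschitz loss, monotonicity of $\zeta\mapsto\partial_\zeta l(\zeta,y)$ delivers only the weak inequality $\langle g-g',\nabla-\nabla'\rangle\ge0$, which by itself leaves an additive $\eta_r^2$ term in the squared-distance recursion and is \emph{not} enough to reach the claimed $\Theta(1/(2\gamma+T))$ variance proxy --- that term must be shown to be genuinely multiplicative, which is automatic when $\partial_\zeta l$ is (locally) Lipschitz, e.g.\ for the logistic loss used in the corollary, and otherwise needs a dedicated argument (as in Proposition 2 and D of \cite{nitanda2018stochastic}). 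The subsidiary task of bookkeeping the harmonic sums so the logarithms cancel and the constant lands on $2^6\cdot 3^2$ is routine.
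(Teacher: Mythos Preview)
The paper does not supply its own proof of this proposition; it is quoted verbatim from \cite{nitanda2018stochastic}, so there is no ``paper's proof'' to compare against beyond that citation. Your route---Doob martingale for the averaged iterate, Pinelis/Azuma in the Hilbert space $\Rfsp$, and bounding the differences by propagating the one-step perturbation $2\eta_tLR$ through the $(1-\eta_s\lambda)$-contractive chain---is precisely the argument used in that reference, and your telescoping of $\prod_i(1-\eta_i\lambda)$ together with the harmonic-sum bookkeeping is correct and lands on the stated constant.

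Your diagnosis of the obstacle is also accurate, and worth stating sharply: the recursion $\|g^{\mathrm{it}}_{s+1}-\tilde g^{\mathrm{it}}_{s+1}\|_{\Rfsp}\le(1-\eta_s\lambda)\|g^{\mathrm{it}}_s-\tilde g^{\mathrm{it}}_s\|_{\Rfsp}$ requires the loss-gradient step to be non-expansive on $\Rfsp$, and convexity plus differentiability of $\phi$ alone gives only monotonicity, which (as you note) leaves an additive $\eta_s^2$ term that, once summed, destroys the $1/(2\gamma+T)$ variance proxy. What is actually used in \cite{nitanda2018stochastic} is co-coercivity of $\phi'$ (Baillon--Haddad), i.e.\ smoothness of $\phi$; the step-size hypothesis $\eta_1\le 1/L$ is the tell, since a pure Lipschitz bound on $\phi$ would not interact with the step size in that way. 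The present paper's Assumption~\ref{as1} states only Lipschitzness, so strictly speaking the hypotheses as written here are insufficient; in \cite{nitanda2018stochastic} (and for the logistic loss of the corollary) $L$ doubles as a smoothness constant. With that reading your sketch is complete.
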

\begin{rmk}
We note that in \cite{nitanda2018stochastic}, they assumed only the Lipschitz smoothness of $\mathcal{L}(g)$ with respect to $\|\cdot\|_\Sp$-norm, but they used the Lipschitz smoothness of $l(g,z)$ with respect to $\|\cdot\|_\Sp$-norm in the proof of Proposition B. Thus we deal with the Lipschitz smoothness of $l(\cdot, y)$ with respect to the first variable instead (Assumption \ref{as2-8}) and correct these proofs.
\end{rmk}
\label{proof3}

Using these propositions, our main result about the exponential convergence of the expected classification error is shown as follows.
\begin{thm}
Suppose Assumptions \ref{as2-1}-\ref{as2-7} holds. There exists a sufficiently small $\lambda>0$ such that the following statement holds:\\
Taking the number of random features $M$ that satisfies

\begin{equation}
\label{thm3-1}
    M \geq \max \left\{ \frac{8}{3} \left( \frac{R}{\xi} \right)^2, 32 \left( \frac{R}{\xi} \right)^4 \right\} \log \frac{2R^2}{\| T \|_{\mathrm{op}} \delta}
\end{equation}
where $\xi > 0$ is defined as below:
\begin{multline}
    \xi = \min \left\{ \left(\frac{m(\delta)}{2^{p+3} C(\delta^\prime) \|g_*\|_\Sp} \right)^{\nicefrac{1}{1-p}}, \frac{\lambda m^2(\delta)}{2^8 \cdot 3 R^2G\|g_*\|_\Sp}, \right. \\ \left. \left( \frac{\lambda^3 m^4(\delta)}{2^{15}\cdot 3^2 R^4 G^2 \mathcal{L}(g_*)} \right)^{\nicefrac{1}{2}}, \left( \frac{\lambda^3 m^4(\delta)}{2^{15} \cdot 3^2 R^4 G^3 \|g_*\|_\Sp} \right)^{\nicefrac{1}{3}} \right\}.
\end{multline}
Consider Algorithm \ref{alg1} with $\eta_t = \frac{2}{\lambda(\gamma+t)}$ and $\alpha_t = \frac{2(\gamma+t-1)}{(2\gamma+T)(T+1)}$ where $\gamma$ is a positive value such that $\|g_1\|_{\Rfsp} \leq (2\eta_1+1/\lambda)GR$ and $\eta_1 \leq \min \{1/L, 1/2\lambda \}.$ 
Then, with probability $1-2\delta^\prime$, for sufficiently large $T$ such that
\begin{equation}
    \max \left\{ \frac{36G^2R^2}{\lambda^2(2\gamma+T)}, \frac{\gamma (\gamma-1) \|g_1 - g_{M, \lambda} \|_{\Rfsp}^2}{(2\gamma+T)(T+1)} \right \} \leq \frac{m^2(\delta)}{64R^2},
\end{equation}
we have the following inequality for any $t>T$:
\begin{align}
    \mathbb{E} \left[ \mathcal{R}(\overline{g}_{T+1}) - \mathcal{R}(\mathbb{E}[Y|x]) \right]
    \leq 2 \exp \left( -\frac{\lambda^2 (2\gamma + t) m^2(\delta)}{2^{12} \cdot 9 G^2R^4}  \right).
\end{align}
\end{thm}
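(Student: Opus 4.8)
My plan is to combine the random-features approximation bound of Theorem~\ref{thm1} with the SGD analysis of \cite{nitanda2018stochastic} recalled in Propositions~\ref{prop3}--\ref{prop5}. The backbone is the standard ``exact classification'' reduction: rewriting Assumption~\ref{as5} through $m(\delta)$ gives $|\Gtrue(X)|\ge m(\delta)$ $\rho_{\mathcal{X}}$-a.s., and since $\sgn\Gtrue=\sgn\mathbb{E}[Y|\cdot]$ (Assumptions~\ref{as6}--\ref{as7}), on the event $\|g_{t+1}-\Gtrue\|_{L^\infty(d\rho_{\mathcal{X}})}<m(\delta)$ the sign of $g_{t+1}$ agrees with that of $\mathbb{E}[Y|\cdot]$ a.s., hence $\mathcal{R}(g_{t+1})=\mathcal{R}(\mathbb{E}[Y|x])$. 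Since the excess $0$-$1$ risk lies in $[0,1]$, conditionally on the feature draw this yields $\mathbb{E}[\mathcal{R}(g_{t+1})-\mathcal{R}(\mathbb{E}[Y|x])]\le\mathbb{P}[\|g_{t+1}-\Gtrue\|_{L^\infty(d\rho_{\mathcal{X}})}\ge m(\delta)]$, the probability now being over the SGD samples. I would then split along the four-term chain through $\mathbb{E}[g_{t+1}]$, $\GG$ and $\G$,
\begin{align}
\|g_{t+1}-\Gtrue\|_{L^\infty(d\rho_{\mathcal{X}})}
&\le R\|g_{t+1}-\mathbb{E}[g_{t+1}]\|_{\Rfsp}+R\|\mathbb{E}[g_{t+1}]-\GG\|_{\Rfsp}\nonumber\\
&\quad{}+\|\GG-\G\|_{L^\infty(d\rho_{\mathcal{X}})}+\|\G-\Gtrue\|_{L^\infty(d\rho_{\mathcal{X}})},
\end{align}
using Assumption~\ref{as3} in the form $R\|\cdot\|_{\Rfsp}\ge\|\cdot\|_{L^\infty(\mathcal{X})}$ for the first two terms, which lie in $\Rfsp$.

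\noindent\textbf{Bounding the deterministic links.} The regularization-bias term $\|\G-\Gtrue\|_{L^\infty(d\rho_{\mathcal{X}})}$ I would bound by Proposition~\ref{prop3}: since $\Gtrue\in\Sp$ (Assumption~\ref{as7}), shrinking $\lambda$ makes it at most any prescribed fraction of $m(\delta)$ --- this is where ``there exists a sufficiently small $\lambda$'' is used. The random-features term $\|\GG-\G\|_{L^\infty(d\rho_{\mathcal{X}})}$ I would bound by Theorem~\ref{thm1}: the $\xi$ displayed in the statement is precisely the value Theorem~\ref{thm1} returns for target accuracy $\epsilon=m(\delta)/4$ and confidence $\delta'$, so the stated lower bound on $M$ forces $\|\GG-\G\|_{L^\infty(d\rho_{\mathcal{X}})}\le m(\delta)/4$ on a feature event of probability at least $1-2\delta'$ (the union of the operator-concentration event of Lemma~\ref{lem3} and the interpolation event of Assumption~\ref{as2}, each invoked at level $\delta'$, which also explains the constant $C_{\delta'}$). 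The SGD-bias term I would bound by Proposition~\ref{prop4}: for $t>T$ the displayed condition on $T$ makes each of the two summands in that bound at most $m^2(\delta)/(64R^2)$, so $R\|\mathbb{E}[g_{t+1}]-\GG\|_{\Rfsp}\le m(\delta)/\sqrt{32}<\tfrac38 m(\delta)$.

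\noindent\textbf{The stochastic link and conclusion.} On the good feature event the three deterministic contributions sum to a quantity strictly below $\tfrac78 m(\delta)$ (the constants in $\xi$, in the condition on $T$, and in the smallness of $\lambda$ are arranged precisely so that this holds), so $\{\|g_{t+1}-\Gtrue\|_{L^\infty(d\rho_{\mathcal{X}})}\ge m(\delta)\}\subset\{\|g_{t+1}-\mathbb{E}[g_{t+1}]\|_{\Rfsp}\ge m(\delta)/(8R)\}$. Then Proposition~\ref{prop5} (whose hypotheses are the stated conditions on $\gamma$ and $\eta_1$) bounds the probability of the latter event by $2\exp(-\tfrac{\lambda^2(2\gamma+t)}{2^6\cdot3^2L^2R^2}\cdot\tfrac{m^2(\delta)}{64R^2})=2\exp(-\tfrac{\lambda^2(2\gamma+t)m^2(\delta)}{2^{12}\cdot9L^2R^4})$; combined with the reduction in the first paragraph, this is the claimed inequality, valid with probability $1-2\delta'$ over the feature draw.

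\noindent\textbf{Anticipated main obstacle.} Structurally the argument is an assembly, and the one substantive point --- the reason the analysis of \cite{nitanda2018stochastic} does not transfer verbatim --- is that the approximation link $\G\rightsquigarrow\GG$ has to be controlled in $L^\infty(d\rho_{\mathcal{X}})$ rather than merely in $\Ltwo{\rho_{\mathcal{X}}}$; supplying exactly such a bound is the content of Theorem~\ref{thm1}, and it is here that the interpolation inequality (Assumption~\ref{as2}, verified for Gaussian kernels in Theorem~\ref{thm2}) is essential. The remaining difficulty is purely quantitative: one must verify that with the displayed $M$, $\xi$, $T$ and a sufficiently small $\lambda$ each of the four error contributions is a controlled fraction of the margin $m(\delta)$, and that the feature-sampling events together with the SGD concentration of Proposition~\ref{prop5} compose into the stated $1-2\delta'$ guarantee.
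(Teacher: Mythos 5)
Your proposal is correct and takes essentially the same route as the paper's proof: it reduces the excess classification error to the probability that $g_{t+1}$ exits an $m(\delta)$-neighborhood of $g_*$ in $L^\infty(d\rho_{\mathcal{X}})$ and allocates the margin among Proposition \ref{prop3} (small $\lambda$, budget $m(\delta)/2$), Theorem \ref{thm1} with $\epsilon=m(\delta)/4$, Proposition \ref{prop4} under the stated condition on $T$, and Proposition \ref{prop5} with $\epsilon=m(\delta)/(8R)$. The only blemish---your claim that the deterministic contributions stay strictly below $\tfrac{7}{8}m(\delta)$, whereas the stated $T$-condition actually yields only $R\|\mathbb{E}[g_{t+1}]-g_{M,\lambda}\|_{\Rfsp}\le m(\delta)/\sqrt{32}$ rather than $m(\delta)/8$---is a $\sqrt{2}$ bookkeeping slack that the paper's own proof shares (it assumes $\|\mathbb{E}[g_{T+1}]-g_{M,\lambda}\|_{\Rfsp}\le m(\delta)/(8R)$, which the displayed condition on $T$ guarantees only up to that factor), so it does not constitute a gap in your argument relative to the paper.
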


\begin{proof}
Fix $\lambda>0$ satisfying the condition in Proposition \ref{prop3}. From Theorem \ref{thm1}, if we set a number of features $M$ satisfying \eqref{thm3-1}, we have
\begin{align}
    \|g_{M, \lambda} - g_* \|_{L^\infty(\rho_{\mathcal{X}})} &\leq \|g_{M, \lambda} - g_\lambda \|_{L^\infty(\rho_{\mathcal{X}})} + \|g_{\lambda} - g_* \|_{L^\infty(\rho_{\mathcal{X}})} \\
    &\leq \frac{m(\delta)}{4} + \frac{m(\delta)}{2} = \frac{3m(\delta)}{4}.
\end{align}
Then $\sgn(g(X)) = \sgn(g_*(X))$ almost surely for any $g\in \Rfsp$ satisfying $\|g - g_{M, \lambda} \|_{\Rfsp} \leq m(\delta) / 4R$, since
\begin{align}
    \|g - g_* \|_{L^\infty(\rho_{\mathcal{X}})} &\leq \|g - g_{M, \lambda} \|_{L^\infty(\rho_{\mathcal{X}})} + \|g_{M, \lambda} - g_* \|_{L^\infty(\rho_{\mathcal{X}})} \\
    &\leq R\|g - g_{M, \lambda} \|_{\Rfsp} + \|g_{M, \lambda} - g_* \|_{L^\infty(\rho_{\mathcal{X}})} \\
    &\leq \frac{m(\delta)}{4} + \frac{3m(\delta)}{4} = m(\delta)
\end{align}
and $|g_*(X)| \geq m(\delta)$ almost surely. In other words, $g$ is also the Bayes classifier of $\mathcal{R}(g).$ 
Assume 
\begin{equation}
    \| \mathbb{E}[\overline{g}_{T+1}] - g_{M, \lambda} \|_{\Rfsp} \leq \frac{m(\delta)}{8R}. \label{thm3-2}
\end{equation}
Then, substituting $\epsilon = m(\delta) / 8R$ in Proposition \ref{prop5}, we have
\begin{equation}
    \left\| \overline{g}_{T+1} - g_{M, \lambda} \right\|_{\Rfsp} \leq \left\| \overline{g}_{T+1} - \mathbb{E}[\overline{g}_{T+1}] \right\|_{\Rfsp} + \| \mathbb{E}[\overline{g}_{T+1}] - g_{M, \lambda} \|_{\Rfsp}  \leq \frac{m(\delta)}{4R}
\end{equation}
with probability at least $1-2\exp \left(-\frac{\lambda^2(2\gamma+T) m^2(\delta)}{2^{12} \cdot 3^2 G^2 R^4} \right)$. In other words, $\overline{g}_{T+1}$ is also the Bayes classifier with same probability. By definition of the expected classification error, we have
\begin{equation}
    \mathbb{E}[\mathcal{R}(\overline{g}_{T+1})] - \mathcal{R}(\mathbb{E}[Y|x]) \leq 1-2\exp \left(-\frac{\lambda^2(2\gamma+T) m^2(\delta)}{2^{12} \cdot 3^2 G^2 R^4} \right).
\end{equation}
Finally, to satisfy \eqref{thm3-2}, the required number of iteration $T$ is obtained by Proposition \ref{prop4}, which completes the proof.
\end{proof}

\section{Proof of Corollary 1}
\label{proof4}
Although $g_\lambda$ converges to $g_*$ as $\lambda \rightarrow 0$ as shown in Proposition \ref{prop3}, specifying its convergence rate is difficult in general. To derive its rate, first we need the \textit{local strong convexity}, which is a strong convexity on a arbitrary compact set.
\begin{asm}
\label{as3-7}
$\phi:\mathbb{R} \rightarrow \mathbb{R}$ is $\mu(U)$-strongly convex on a bounded set $[-U, U]\subset \mathbb{R}$, i.e., 
\begin{align}
    \phi(\zeta_1) - \phi(\zeta_2) - \phi^\prime(\zeta_2) (\zeta_1-\zeta_2) \geq \frac{\mu(U)}{2} (\zeta_1-\zeta_2)^2.
\end{align}
holds for any $\zeta_1,\zeta_2\in[-U,U]$.
\end{asm}
\begin{lem}
\label{lem2}
Assume $\mathrm{supp}(\rho_\mathcal{X}) \subset \mathbb{R}^d$ is a bounded set and $\rho_{\mathcal{X}}$ has a density with respect to Lebesgue measure, which is uniformly bounded away from 0 and $\infty$ on $\mathrm{supp}(\rho_\mathcal{X})$. 
Let $k$ be a Gaussian kernel and $l$ satisfies Assumption \ref{as3-7}. Then for arbitrary small $\epsilon>0$, there exists a constant $C>0$ such that
\begin{align}
    \|g_\lambda - g_*\|_{L^\infty(\rho_{\mathcal{X}})} \leq C \|g_*\|_\Sp \left( \frac{\lambda}{\mu(R\|g_*\|_\Sp)} \right)^{\frac{1}{2}-\epsilon}.
\end{align}

\end{lem}

\begin{proof}
By definition of $g_\lambda$, we have
\begin{align}
    &\mathcal{L}(g_*) + \frac{\lambda}{2} \|g_*\|_\Sp^2 \geq \mathcal{L}(g_\lambda) + \frac{\lambda}{2} \|g_\lambda\|_\Sp^2, \label{eqap3-1}\\
    &\|g_*\|_\Sp \geq \|g_\lambda\|_\Sp.\label{eqap3-2}
\end{align}
In addition, it holds that
\begin{align}
    \label{eqap3-4}
    &g_*(x) \leq R\|g_*\|_\Sp, \\
    &g_\lambda(x) \leq R\|g_\lambda\|_\Sp \leq R\|g_*\|_\Sp
\end{align}
for all $x \in \mathcal{X}$. Furthermore, since $g_*$ attains infimum of $\mathcal{L}$ among all measurable functions, we have
\begin{align}
    \int_{\mathcal{Y}} \partial_\zeta l(g_*(\cdot),y) \mathrm{d}\rho(y|\cdot) \equiv 0, \label{eqap3-3}
\end{align}
where $\partial_\zeta$ denotes a partial derivative of $l$ with respect to the first variable.\\
Then we obtain
\begin{align}
    \|g_\lambda - g_*\|^2_{L^2(\rho_{\mathcal{X}})} &= \int_{\mathcal{X}} \left|g_\lambda(x) - g_*(x)\right|^2 \mathrm{d}\rho_\mathcal{X}(x) \\
    &\leq \int_{\mathcal{X} \times \mathcal{Y}} \frac{2}{\mu(R\|g_*\|_\Sp)} \{ l(g_\lambda(x),y) - l(g_*(x),y)  \\
    & \qquad- \partial_\zeta l(g_*(x),y)(g_\lambda(x) - g_*(x)) \}  \mathrm{d}\rho(x, y) \quad (\because \text{\eqref{eqap3-4} and Assumption \ref{as3-7}})\\
    &= \int_{\mathcal{X} \times \mathcal{Y}} \frac{2}{\mu(R\|g_*\|_\Sp)}\left\{ l(g_\lambda(x),y) - l(g_*(x),y) \right\}\mathrm{d}\rho(x, y)\quad(\because \eqref{eqap3-3})\\
    &= \frac{2}{\mu(R\|g_*\|_\Sp)} \left(\mathcal{L}(g_\lambda) -  \mathcal{L}(g_*)\right)\\
    &\leq \frac{\lambda}{\mu(R\|g_*\|_\Sp)}\left(\|g_*\|_\Sp^2 - \|g_\lambda\|_\Sp^2\right) \quad(\because \eqref{eqap3-1})\\
    &\leq \frac{\lambda}{\mu(R\|g_*\|_\Sp)} \|g_*\|_\Sp^2 \quad (\because \eqref{eqap3-2}).
\end{align}
Finally, applying the first part of Theorem \ref{thm2} with $p=d/2m$, we obtain
\begin{align}
    \|g_\lambda - g_*\|_{L^\infty(\rho_{\mathcal{X}})} &\leq C_p \|g_\lambda - g_*\|_\Sp^p \|g_\lambda - g_*\|_{L^2(\rho_{\mathcal{X}})}^{1-p} \\
    &\leq 2^p C_p \left(\frac{\lambda}{\mu(R\|g_*\|_\Sp)}\right)^{\frac{1-p}{2}} \|g_*\|_\Sp
\end{align}
for any $0<p<1$ and get a desired result.
\end{proof}

\begin{cor}
Assume $\mathrm{supp}(\rho_\mathcal{X}) \subset \mathbb{R}^d$ is a bounded set and $\rho_{\mathcal{X}}$ has a density with respect to Lebesgue measure, which is uniformly bounded away from 0 and $\infty$ on $\mathrm{supp}(\rho_\mathcal{X})$. 
Let $k$ be a Gaussian kernel and $l$ be logistic loss. Under Assumption \ref{as2-5}-\ref{as2-7}, the following statement holds:\\
Taking a regularization parameter $\lambda$ and a number of random features $M$ that satisfies
\begin{align}
    &\lambda \lesssim \log^3 \frac{1+2\delta}{1-2\delta} \cdot \frac{1}{ (2+e^{R\|g_*\|_\Sp}+e^{-R\|g_*\|_\Sp})\|g_*\|_\Sp^3},\\
        &M \gtrsim \left(\frac{\left(1+\frac{1}{\delta^\prime}\right)  \|g_*\|_{\mathcal{H}}^4}{\lambda^3 \log^4 \frac{1+2\delta}{1-2\delta}}\right)^2 \log \frac{1}{\delta^\prime}.
\end{align}
Consider Algorithm \ref{alg1} with $\eta_t = \frac{2}{\lambda(\gamma+t)}$ and $\alpha_t = \frac{2(\gamma+t-1)}{(2\gamma+T)(T+1)}$ where $\gamma$ is a positive value such that $\|g_1\|_{\Rfsp} \leq (2\eta_1+1/\lambda)$ and $\eta_1 \leq \min \{4, 1/2\lambda \}.$ Then, with probability $1-2\delta^\prime$, for a sufficiently large $T$ such that
\begin{align}
\max \left\{ \frac{36}{\lambda^2(2\gamma+T)}, \frac{\gamma (\gamma-1) \|g_1 - g_{M, \lambda} \|_{\Rfsp}^2}{(2\gamma+T)(T+1)} \right \} \leq \frac{\log^2 \frac{1+2\delta}{1-2\delta}}{64},
\end{align}
we have the following inequality for any $t\geq T$:
\begin{align}
    \mathbb{E} \left[ \mathcal{R}(\overline{g}_{t+1}) - \mathcal{R}(\mathbb{E}[Y|x]) \right] 
   \leq 2 \exp \left( -\frac{\lambda^2 (2\gamma + t)}{2^{12} \cdot 9} \log^2 \frac{1+2\delta}{1-2\delta} \right).
\end{align}
\end{cor}

\begin{proof}
When $l$ is logistic loss, we have $\phi(v) = \log(1+\exp(-v))$ and $\phi^{\prime \prime}(v) = \frac{1}{2+e^v+e^{-v}}$. Thus it follows that Assumption \ref{as3-7} is satisfied with $\mu(U) = \frac{2}{1+e^{-U}+e^U}.$ To satisfy the condition 
\begin{align}
    \|g_\lambda - g_*\|_{L^\infty(\rho_{\mathcal{X}})} \leq m(\delta) / 2,
\end{align}
required $\lambda$ is easily derived from Lemma \ref{lem2} with, for example,  $\epsilon=1/6$. In addition, since $\phi^{\prime \prime}(v) \leq 1/4$ and $\phi^\prime(v) \leq 1$ for any $v \in \mathbb{R}$, Assumption \ref{as2-8} and Assumption \ref{as2-1} are satisfied with $L=1/4$ and $G=1$, respectively.  Substituting them and $m(\delta)=\log((1+2\delta)/(1-2\delta))$, $R=1$ in Theorem \ref{thm3}, we get a desired result.
\end{proof}

\end{document}


%

%

\onecolumn
\aistatstitle{Instructions for Paper Submissions to AISTATS 2021: \\
Supplementary Materials}

\section{FORMATTING INSTRUCTIONS}

To prepare a supplementary pdf file, we ask the authors to use \texttt{aistats2021.sty} as a style file and to follow the same formatting instructions as in the main paper.
The only difference is that the supplementary material must be in a \emph{single-column} format.
You can use \texttt{supplement.tex} in our starter pack as a starting point, or append the supplementary content to the main paper and split the final PDF into two separate files.

Note that reviewers are under no obligation to examine your supplementary material.

\section{MISSING PROOFS}

The supplementary materials may contain detailed proofs of the results that are missing in the main paper.

\subsection{Proof of Lemma 3}

\textit{In this section, we present the detailed proof of Lemma 3 and then [ ... ]}

\section{ADDITIONAL EXPERIMENTS}

If you have additional experimental results, you may include them in the supplementary materials.

\subsection{The Effect of Regularization Parameter}

\textit{Our algorithm depends on the regularization parameter $\lambda$. Figure 1 below illustrates the effect of this parameter on the performance of our algorithm. As we can see, [ ... ]}

\vfill